\pgfplotsset{compat=newest}
\title{Ancestral Reinforcement Learning: Unifying Zeroth-Order Optimization and Genetic Algorithms for Reinforcement Learning}
\author{
   So Nakashima\textsuperscript{\rm 1} and Tetsuya J. Kobayashi\textsuperscript{\rm 1,2}
}
\theoremstyle{definition}
\newtheorem{theorem}              {Theorem}
\newtheorem{lemma}      [theorem] {Lemma}
\newtheorem{proposition}[theorem] {Proposition}
\newcommand{\E}{\mathbb{E}}
\newcommand{\Prob}{\mathrm{P}}
\newcommand{\popSize}{N_{\mathrm{pop}}}
\newcommand{\Aset}{\mathcal{A}}
\newcommand{\Xset}{\mathcal{X}}
\newcommand{\forward}{\Prob_{\mathrm{F}}}
\newcommand{\back}{\Prob_{\mathrm{B}}}
\newcommand{\KL}[2]{\mathcal{D}_{\mathrm{KL}}\left[#1, #2\right]}
\newcommand{\Xpath}{\mathbb{X}}
\newcommand{\Apath}{\mathbb{A}}
\newcommand{\ancestralGrad}{\hat{\nabla}_{\mathrm{ancestral}}}
\newcommand{\popV}[1]{V_{\beta}^{(#1)}}
\DeclareRobustCommand{\suptime}[2]{\ifthenelse{\equal{#2}{}}{ {#1} }{ {#1}^{(#2)} }   }
\begin{document}

\maketitle

\begin{abstract}
Reinforcement Learning (RL) offers a fundamental framework for discovering optimal action strategies through interactions within unknown environments. 
Recent advancement have shown that the performance and applicability of RL can significantly be enhanced by exploiting a population of agents in various ways.
Zeroth-Order Optimization (ZOO) leverages an agent population to estimate the gradient of the objective function, enabling robust policy refinement even in non-differentiable scenarios. As another application, genetic algorithms (GA) boosts the exploration of policy landscapes by mutational generation of policy diversity in an agent population and its refinement by selection. 
A natural question is whether we can have the best of two worlds that the agent population can have.
In this work, we propose Ancestral Reinforcement Learning (ARL), which synergistically combines the robust gradient estimation of ZOO with the exploratory power of GA. 
The key idea in ARL is that each agent within a population infers gradient by exploiting the history of its ancestors, i.e., the  ancestor population in the past, while maintaining the diversity of policies in the current population as in GA. 
We also theoretically reveal that the populational search in ARL implicitly induces the KL-regularization of the objective function, resulting in the enhanced exploration.
Our results extend the applicability of populational algorithms for RL.
\end{abstract}

\section{Introduction}

Reinforcement Learning~(RL)~\cite{sutton2018reinforcement} is a fundamental framework for discovering optimal action strategies through interactions with unknown environments, whose applications cover video games~\cite{mnih2015human}, Go~\cite{silver2016mastering}, robotics~\cite{levine2016end}, auto-drive~\cite{kendall2019learning}, and inverted helicopter flight~\cite{ng2006autonomous}.
From a broader perspective, it is a class of optimization in which the gradient of the objective function is not explicitly provided to the algorithm.
Recent advancement have shown that the performance and applicability of RL can be enhanced by exploiting a population of agents in various ways.
A representative example is Zeroth-Order Optimization (ZOO)~(also called evolutionary strategy)~\cite{rechenberg1973evolutionary, salimans2017evolution, lei2022zeroth}, in which a population of agents is used to estimate the gradient of the objective function (i.e. cumulative rewards) without explicit differentiation.
Specifically, at each iteration, a population of agents is generated by perturbing the policy of the master agent with small noises, the gradient is estimated as the average of the noises weighted by the cumulative rewards of the agents, and then the master policy is updated using the estimate. 
ZOO achieves robust optimization~\cite{lehman2018es} even for non-differentiable objective functions~\cite{kober2021from} and can be accelerated by parallel computation. 
Moreover, ZOO perturbed the policy in the parameter space instead of the action space, which enables us to simulate MDP for a long time while suppressing the variance of the estimated gradient~\cite{salimans2017evolution}

Another potential application of population is to explore a broader space of policies by keeping diversity of the agents in the population.
Genetic Algrithm~(GA)~\cite{such2017deep, risi2019deep, whitley1993genetic} has been leveraged for this purpose; at each iteration, a new population of agents is generated from the parent population via  mutation and/or crossover of the parent agents for exploring more diverse policies than single agent. Then, the new population is shaped by selecting agents with higher cumulative rewards.
Compared with ZOO which retains only one policy at each update, GA maintains multiple policies, enabling GA more exploratory than ZOO.
However, GA does not estimate the gradient, making it inefficient for certain classes of problems~\cite{such2017deep}.

This work aims at combining the best of these two approaches by keeping the variety of policies in a population for exploration whilst enabling each agent estimates the gradient of the objective function without differentiation.
The difficulty lies in the conflict in the ways that the two approaches utilize population; ZOO amalgamates population whereas GA has to keep a diverse population.
In previous attempts to combine GA and the gradient update~\cite{khadka2018evolution, callaghan2023evolutionary}, an "elite" agent was updated by utilizing the history of a population of other agents that follow GA.
While its efficiency was verified empirically, it lacks a theoretical basis for underlying mechanism and performance.

In this paper, we propose \textbf{Ancestral Reinforcement Learning} (ARL), which combines ZOO and GS. In ARL, the gradient is estimated, without generating a new population like ZOO, by utilizing the survivor-ship bias of the ancestor populations of individual agents, and thus the current agent population can retain diversity for exploration. 
We demonstrate the effectiveness of ARL by numerical experiments.
Our major contribution in this work is more theoretical than experimental.
Specifically, we prove that each agent in ARL can estimate the gradient of objective function by utilizing survivorship bias. In addition, we show that the populational search in ARL implicitly induces the KL-regularization of the objective function, resulting in the enhanced exploration of ARL compared to ZOO.
This theoretical basis would contribute to devising new population algorithms and verifying their efficiency.

\subsection{Related Works}
Populations of agents have been employed in several ways for solving various optimization problems, e.g., GA~\cite{such2017deep, risi2019deep,whitley1993genetic}, ZOO~\cite{khadka2018evolution, lei2022zeroth}, and augmented random search~\cite{mania2018simple, kober2021from}.

Among others, the combination of evolutionary algorithms with learning was pioneered by Hinton and Nowlan~\cite{hinton1987learning} inspired by the Baldwin effect in evolutionary biology.
They attempted to leverage additional random search of individual agents in GA.
In this paper, we are interested in integrating populational search of GA with gradient estimate of individual agents via survivorship bias without differentiation of objective function.
Recent papers~\cite{khadka2018evolution, callaghan2023evolutionary} attempted to combine GA and the gradient method.
In these algorithms, the GA algorithm yields histories of actions and states. Then, one agent outside of GA learns from those histories by the gradient method.
However, the theoretical foundation for why these algorithms work effectively is still unclear.
This paper reveals the meaning of learning from the paths generated by GA.

\section{Preliminary}
In this section, we outline the theoretical basis of RL, entropy-regularized RL, ZOO, and GA before introducing ARL.

\subsection{Reinforcement Learning}
We will consider Markov Decision Process (MDP) with state space $\Xset$ and action space $\Aset$~\cite{puterman2014markov}. 
In this paper, we denote the state and action at time $t$ by $\suptime{x}{t}\in \Xset$ and $\suptime{a}{t}\in \Aset$, respectively.
When the state is $\suptime{x}{t}$, an agent chooses its action $\suptime{a}{t}$ by a policy $\pi(\cdot \mid \suptime{x}{t})$, which is a distribution on $\Aset$ conditioned by the state $\suptime{x}{t}$.
In this main text, we consider the case where the next state $\suptime{x}{t+1}$ is determined deterministically given the current state $\suptime{x}{t}$ and the action $\suptime{a}{t}$ as $\suptime{x}{t+1} = f(\suptime{x}{t}, \suptime{a}{t})$ for simplicity.
In addition, $\suptime{x}{0}$ is assumed to be constant.
The result for general MDP is shown in Appendix.
We denote the history of the states $\{\suptime{x}{s}\}_{s=0,1,\dots}$ and that of the actions $\{\suptime{a}{s}\}_{s=0,1,\dots}$ by $\Xpath$ and $\Apath$, respectively.
We also define the truncated histories as $\suptime{\Xpath}{t:t'} := \{\suptime{x}{s}\}_{s=t,t+1,\dots, t'}$ and $\suptime{\Apath}{t:t'} := \{\suptime{a}{s}\}_{s=t,t+1,\dots, t'}$.
When $t=0$ and $t'=\infty$, we omit the superscript and just denote as $\Xpath$. Similarly, we use $\suptime{\Xpath}{t:}$ for $t'=\infty$.
The discounted cumulative reward $R[\Xpath, \Apath]$ is defined by $\sum_{t} \gamma^t r(\suptime{x}{t}, \suptime{a}{t})$, where $\gamma \in (0,1)$ is a discount factor and $r$ is a reward function.
The objective function of RL is the expected value of the cumulative reward function defined by $J(\pi) := \E_{\forward^{\pi}}\left[ R[\Xpath, \Apath]\right]$ where 
\begin{align}
    \label{eq:def-forward}
    \forward^{\pi}[\Xpath, \Apath] :=  \prod_{t} \pi(\suptime{a}{t} \mid \suptime{x}{t}) \delta_{\suptime{x}{t+1}, f(\suptime{x}{t}, \suptime{a}{t})},
\end{align}
is the probability that policy $\pi$ generates the history pair, $\Xpath$ and $\Apath$.

\subsection{Entropy-regularized Reinforcement Learning}
To improve the robustness of learning and exploration of the policy, a regularizer is often appended to $J(\pi)$.
An example is the entropy-regularized RL~\cite{howard1972risk, sutton2018reinforcement, ziebart2010modeling, haarnoja2018soft, braun2011path, neu2017unified, fox2015taming} in which the objective function $J_{\mathrm{ent}}(\pi) = \E_{\forward^{\pi}}\left[ R_{\mathrm{ent}}[\Xpath, \Apath]\right]$ is defined with 
\begin{align}
    \label{eq:max-ent-rl}
    R_{\mathrm{ent}}[\Xpath, \Apath] &= \sum_t \gamma^t \left( r(\suptime{x}{t}, \suptime{a}{t}) + \frac{1}{\beta} \log \pi( \suptime{a}{t} \mid \suptime{x}{t}) \right).
\end{align}
The extra term $\log \pi( \suptime{a}{t} \mid \suptime{x}{t})$ is related to the Kullback-Leibler (KL) divergence and encourages the agents to be more exploratory.
 For the value function defined as  
\begin{align}
    V(x) = \E_{\forward^{\pi}}\left[ R_{\mathrm{ent}}[\Xpath, \Apath] \mid \suptime{x}{0} = x \right],
\end{align}
a Bellman-type recursive equation is satisfied:
\begin{align}
    \label{eq:max-ent-v-bellman}
    V(x) = \E_{\pi(a \mid x)} \left[r(x, a) +  \frac{1}{\beta} \log \pi(a \mid x) + \gamma  V(f(x,a))\right].
\end{align}
Then, the objective is obtained by $J_{\mathrm{emt}} = \E[V(x)]$.

\subsection{Zeroth-Order Optimization}

Algorithm~\ref{alg:es} describes how ZOO works, in which we iteratively update a single master policy $\pi_{\suptime{\theta}{n}}$. The policy $\pi_{\theta}$ is parametrized by $\theta$. 
At each iteration $n$, we generate a population of policies from the master, the $i$th policy of which has parameter $\suptime{\theta_i}{n} = \suptime{\theta}{n} + \sigma \suptime{\epsilon}{i}$ perturbed by small noise $ \sigma \suptime{\epsilon}{i}$.
Here, each element of $\suptime{\epsilon}{i}$ follows the standard normal distribution independently, and the parameter $\sigma$ controls the size of noise. 
We then observe the cumulative rewards $R_i$ of each agent by running a simulation.
The gradient $\nabla_\theta J(\pi_{\suptime{\theta}{i}})$ is estimated by $\suptime{\hat{g}}{n} := \sum_i R_i \suptime{\epsilon}{i} / \sigma $.
Using this estimation, we update the master policy by the gradient ascent, that is, 
\begin{align}
    \suptime{\theta}{n+1} \gets \suptime{\theta}{n} + \alpha \suptime{\hat{g}}{n},
\end{align}
where $\alpha$ is a learning rate.
Previous studies~\cite{rechenberg1973evolutionary, salimans2017evolution} have proven that $\suptime{\hat{g}}{n}$ is an unbiased estimator of the gradient:
$\E[\suptime{\hat{g}}{n}] = \nabla_{\theta} J(\pi_{\suptime{\theta}{n}})$.
It should be noted that zeroth-order optimization in a broader context is not limited to this specific instance~\cite{nesterov2017random, flaxman2005online, ghadimi2013stochastic, douchi2015optimal}.

\begin{algorithm}[tb]
\caption{Zeroth Order Optimization}
\label{alg:es}
\begin{algorithmic}[1]
\State{Sample an initial policy $\pi_{\suptime{\theta}{0}}$.}
\For{$n=0, 1,\dots$}
        \For{$i = 0,1,\dots, \popSize -1$}
            \State{Sample noise $\suptime{\epsilon}{i}$.}
            \State{$\suptime{\theta_i}{n} \gets \suptime{\theta}{n} + \sigma \suptime{\epsilon}{i}$.}
            \State{Run a simulation with policy $\pi_{\suptime{\theta_i}{n}}$ and observe discounted cumulative reward $R_i$.}
        \EndFor
        \State{Estimate gradient by $\suptime{\hat{g}}{n} \gets \sum_i R_i \suptime{\epsilon}{i} / \sigma $. }
        \State{$\suptime{\theta}{n+1} \gets \suptime{\theta}{n} + \alpha\suptime{\hat{g}}{n} $.}
\EndFor
\end{algorithmic}
\end{algorithm}

\subsection{Population Optimization via GA (POGA)}
GA is a metaheuristic inspired by biological evolution and consists of various ingredients~\cite{eiben2015introduction}. 
For comparison and integration with ZOO, we here focus only on its restricted aspect as a population optimization algorithm using selection and random mutation.
Population optimization via GA (POGA) described in Algorithm~\ref{alg:ga} updates a population of policies $\{\pi_{\suptime{\theta_i}{n}}\}_{i=0,1,\dots, \popSize-1}$.
At each iteration $n$, small noise is added to the parameter of each policy, resulting in a perturbed population $\{\pi_{\suptime{\theta_i'}{n+1}}\}_{i=0,1,\dots, \popSize-1}$.
This process is called mutation.
The cumulative reward $R_i$ for each policy is obtained by running a simulation. 
Then, a new population $\{\pi_{\suptime{\theta_i}{n+1}}\}_{i=0,1,\dots, \popSize-1}$ for the next step is formed by independent sampling of policies from the mutated population with a probability proportional to fitness $\exp(\beta R_i)$, where $\beta$ is a hyperparameter.
Here, the fitness is defined so that the algorithm becomes invariant to the transformation of the rewards of the form: $r'(x,a) = r(x,a) + c$, where $c$ is a constant.

\begin{algorithm}[tb]
\caption{Population Optimization via GA (POGA)}
\label{alg:ga}
\begin{algorithmic}[1]
\State{Sample initial policies $\{\pi_{\suptime{\theta_i}{0}}\}_{i=1,2,\dots,\popSize-1}$.}
\For{$n=0, 1,\dots$}
        \For{$i = 0,1,\dots, \popSize -1$}
            \State{Mutate policy $\pi_{\suptime{\theta_i}{n}}$ by adding small noise to the parameter and obtain $\pi_{\suptime{\theta_i'}{n}}$.}
            \State{Run simulation for the mutated policy $\pi_{\suptime{\theta_i'}{n}}$ and observe the discounted cumulative reward $R_i$.}
            \State{Calculate fitness by $\suptime{f}{n}_i \gets \exp(\beta R_i)$.}
        \EndFor
        \State{Select each agent $\pi_{\suptime{\theta_{i}}{n+1}}$ in the next population independently from the mutated population $\{\pi_{\suptime{\theta_i'}{n}}\}_{i =0,1,\dots, \popSize-1}$ with probability proportional to $\suptime{f_i}{n}$.}
\EndFor
\end{algorithmic}
\end{algorithm}

The principle of Algorithm~\ref{alg:ga} for optimization becomes explicit by considering the large size limit $\popSize \to \infty$.
Suppose here that the mutation does not occur in Algorithm~\ref{alg:ga} because selection plays the definite role in defining optimization whereas mutation works for exploration.
Let $\suptime{p}{n}(\pi)$ be the frequency of the policy $\pi$ in the population at the $n$-th iteration.
For $\popSize \to \infty$, we can approximate the time evolution of $\suptime{p}{n}$ by
\begin{align}
    \label{eq:def-pd}
    \suptime{p}{n+1}(\pi) =   \frac{\E_{ \forward^{\pi}}\left[\exp(\beta R[\Xpath, \Apath])\right]}{\E_{\suptime{p}{n}(\pi')}\left[\E_{ \forward^{\pi'}}\left[\exp(\beta R[\Xpath, \Apath])\right] \right]} \suptime{p}{n}(\pi). 
\end{align}
The numerator $\E_{ \forward^{\pi}}\left[\exp(\beta R[\Xpath, \Apath])\right]$ is the objective-dependent weighting for sampling the policy $\pi$ by selection.
In the context of population dynamics, it can be seen as the expected number of daughters that the parent with policy $\pi$ generates.
The logarithm of this quantity
\begin{align}
    \label{eq:population_fitness}
    \lambda(\pi) := \frac{1}{\beta} \log\E_{\forward^{\pi}}\left[ \exp({\beta R[\Xpath, \Apath])}\right],
\end{align}
is known as the population fitness of policy $\pi$.
From the time evolution defined in~\eqref{eq:def-pd}, $\lambda(\pi)$ is proportional to the exponential growth rate of the fraction of policy $\pi$ in the population. From this, we can see that the maximizer of $\lambda(\pi)$ dominates the population:
\begin{lemma}
    \label{thm:population-fitness}
    Assume that Algorithm~\ref{alg:ga} does not mutate the policies.
    If $\pi$ satisfies $\lambda(\pi) > \lambda(\pi')$ for any other $\pi'$ and the population size is large enough, then $\pi$ dominates the population.
\end{lemma}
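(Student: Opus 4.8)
The plan is to reduce the selection dynamics in \eqref{eq:def-pd} to a normalized power iteration whose long-time limit can be read off explicitly. First I would introduce the shorthand $W(\pi) := \E_{\forward^{\pi}}\left[\exp(\beta R[\Xpath, \Apath])\right]$, so that by \eqref{eq:population_fitness} we have the identity $W(\pi) = \exp(\beta \lambda(\pi))$, and $\lambda(\pi) > \lambda(\pi')$ is equivalent to $W(\pi) > W(\pi')$. With this notation, and since mutation is assumed absent, the update \eqref{eq:def-pd} becomes
\begin{align}
    \suptime{p}{n+1}(\pi) = \frac{W(\pi)}{\bar{W}_n}\, \suptime{p}{n}(\pi), \qquad \bar{W}_n := \E_{\suptime{p}{n}(\pi')}\left[W(\pi')\right].
\end{align}
The key structural observation is that the denominator $\bar{W}_n$ is a normalization constant independent of $\pi$; it merely rescales the whole distribution so that it sums (integrates) to one.

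Second, I would unroll this recursion by induction on $n$. Since each step multiplies $\suptime{p}{n}(\pi)$ by $W(\pi)$ and then renormalizes, the normalizations collapse into a single denominator and yield the closed form
\begin{align}
    \suptime{p}{n}(\pi) = \frac{W(\pi)^{n}\, \suptime{p}{0}(\pi)}{\E_{\suptime{p}{0}(\pi')}\left[W(\pi')^{n}\right]}.
\end{align}
Third, I would analyze the limit $n \to \infty$ for the unique maximizer $\pi^{\star}$. Dividing numerator and denominator by $W(\pi^{\star})^{n}$ gives
\begin{align}
    \suptime{p}{n}(\pi^{\star}) = \frac{\suptime{p}{0}(\pi^{\star})}{\E_{\suptime{p}{0}(\pi')}\left[\left(W(\pi')/W(\pi^{\star})\right)^{n}\right]},
\end{align}
and for every $\pi' \neq \pi^{\star}$ the ratio satisfies $W(\pi')/W(\pi^{\star}) = \exp\!\big(\beta(\lambda(\pi') - \lambda(\pi^{\star}))\big) < 1$. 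Hence each such term decays geometrically to zero, and provided $\pi^{\star}$ is present initially (i.e. $\suptime{p}{0}(\pi^{\star}) > 0$) the expectation in the denominator converges to $\suptime{p}{0}(\pi^{\star})$, so $\suptime{p}{n}(\pi^{\star}) \to 1$; that is, $\pi^{\star}$ dominates the population.

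I expect the main obstacle to be the passage to the limit when the policy space is continuous rather than a finite set of surviving types. Pointwise decay of each ratio $(W(\pi')/W(\pi^{\star}))^n$ is not by itself enough, because a continuum of near-optimal policies clustering just below $\lambda(\pi^{\star})$ could keep the denominator from collapsing; I would therefore justify the interchange of limit and expectation by a dominated-convergence or Laplace-type argument, using that the integrand is bounded by $1$ and that the mass near the maximum is controlled by the strictness of the inequality $\lambda(\pi) > \lambda(\pi')$. The finite-population statement of the lemma (``population size is large enough'') is what legitimizes replacing the stochastic selection of Algorithm~\ref{alg:ga} by the deterministic dynamics \eqref{eq:def-pd} in the first place, so I would also note that the domination conclusion is to be read in this large-$\popSize$ mean-field sense.
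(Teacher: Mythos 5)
Your proposal is correct and follows essentially the same route as the paper: starting from the mean-field selection dynamics \eqref{eq:def-pd}, you identify $\exp(\beta\lambda(\pi))$ as the per-generation amplification factor and conclude that the unique maximizer's fraction grows to one. The paper merely asserts this via the remark that $\lambda(\pi)$ is the exponential growth rate of the fraction of $\pi$ (with the dynamics themselves derived in the appendix); your unrolled closed form and the geometric-decay argument make that step explicit, and your caveats about a continuous policy space and the implicit requirement $\suptime{p}{0}(\pi^{\star})>0$ are sensible refinements rather than departures.
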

This lemma means that $\lambda(\pi)$ rather than $J(\pi)$ is the objective function of POGA. Nevertheless, population fitness can be seen as a generalization of the usual objective function $J(\pi)$.
In fact, $\lambda(\pi)$ converges to $J(\pi)$ as $\beta \to 0$.

\section{Ancestral Reinforcement Learning: Unification of ZOO and POGA}

\begin{figure}[tb]
  \begin{center}
        \includegraphics[width=1.0\linewidth]{./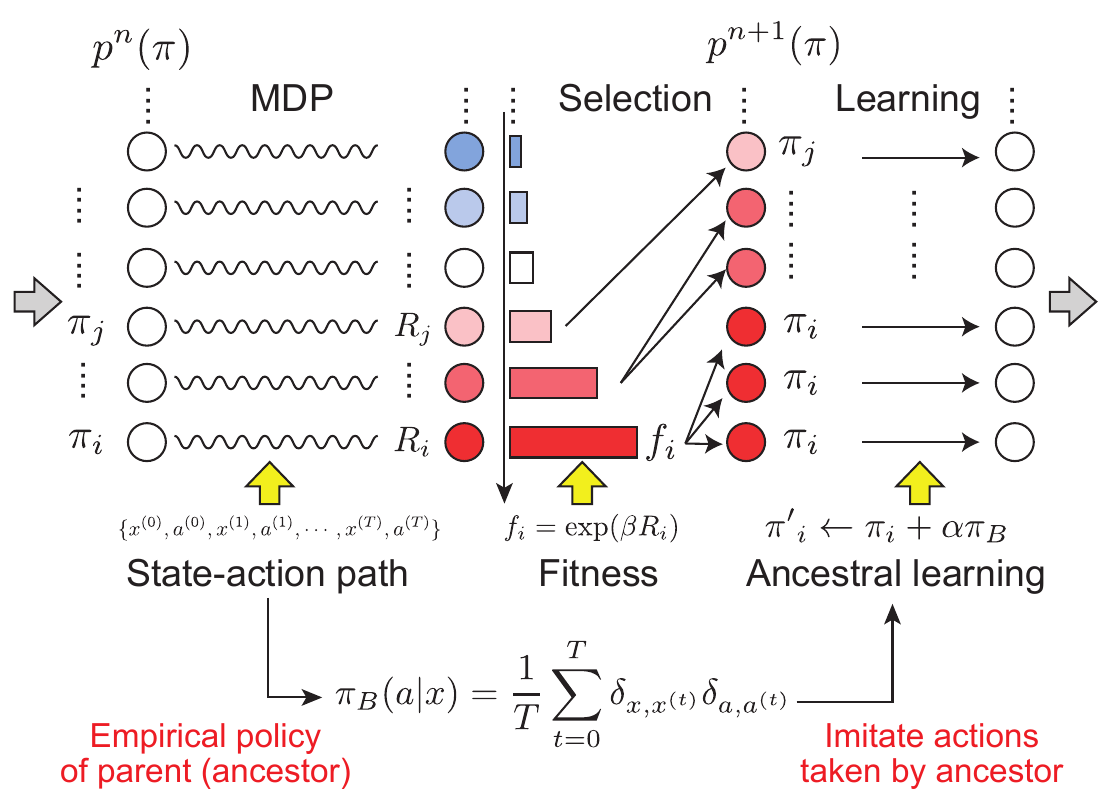}
   \end{center}
    \caption{Schematic representation of Ancestral Reinforcement Learning (ARL). In ARL, agents of the next generation is selected from the current population according to their fitness defined by cumulative rewards observed by MDP simulation. After selection, the policy of each agent is updated by ancestral learning. 
    At ancestral learning step, the policy is modified to imitate what the ancestor did using the empirical policy of the parent (ancestor). Owing to the survivorship bias, this update effectively works as a kind of gradient ascent.
    }
    \label{fig:ancestral_learning}
\end{figure}

If the mutation step in Algorithm~\ref{alg:ga} can be replaced with gradient ascent, we may obtain a more efficient algorithm than POGA that unifies ZOO and POGA.
The core difficulty is how to estimate the gradient. 
In POGA, policies in the population $\pi_{\suptime{\theta_i}{n}}$ are derived from different parent policies whereas they are from the same master policy in ZOO. 
Thus, we cannot use the population of POGA for gradient estimation as in ZOO.

To resolve this difficulty, we propose \textbf{Ancestral Reinforcement Learning} (ARL) (Algorithm~\ref{alg:arl} and Figure~\ref{fig:ancestral_learning}).
In ARL, the gradient is estimated by ancestral learning (Algorithm~\ref{alg:ancestral_learning}), in which the population of ancestors for each policy is used to estimate gradient without generating new population. 
In this work, we focus on ARL using only the information of parent, i.e., the ancestor of one-generation ago while we could employ the information of earlier ancestor population.
Specifically, we replace the random policy mutation in Algorithm~\ref{alg:ga} at step 4 with an update rule inducing the policy to repeat the parent's actions, i.e. the actions learned from the ancestor.
For an illustrative case of tableau MDP, that is, $\theta = \{\pi(a \mid x)\}_{a \in \Aset, x \in \Xset}$, the empirical distribution $\pi_B$ of the parent's actions of the $i$-th policy at the $n$-th generation is defined by
\begin{align}
    \suptime{\pi}{n}_B(x, a) := \frac{1}{T} \sum_{t=0}^{T-1} \delta_{x, \suptime{x_{p(i)}}{t}} \delta_{a,\suptime{a_{p(i)}}{t}}, 
\end{align}
where $\suptime{x_{p(i)}}{t}$ and $\suptime{a_{p(i)}}{t}$ are the state and action of the parent $p(i)$ of the $i$-th policy at time $t$, $\delta_{\cdot, \cdot}$ is Kronecker's delta function, and $T$ is the time length of the MDP simulation for one generation.
Then, we replace the mutation in Algorithm~\ref{alg:ga} with the following update using $\suptime{\pi}{n}_B$: 
\begin{align}
    \label{eq:ancestral_learning}
    \suptime{\pi'_i}{n}(a \mid x) :\propto \suptime{\pi_i}{n}(a \mid x)+ \alpha \suptime{\pi_B}{n}(a \mid x).
\end{align}
The updated policy is the mixture of the original policy $\suptime{\pi_i}{n}$ and the frequency of actions taken by the parent $\suptime{\pi_B}{n}$.

An intuition why mimicking parent's actions leads to a gradient estimation can be gained by considering the simplest situation with single state and two actions, i.e., $\Xset = \{x_*\}$ and $\Aset = \{a_0,a_1\}$. Then, the reward is determined solely by action as $r(a_0) = w$ and $r(a_1) = 0$, where $w > 0$.
Without selection via fitness, imitating parent does not benefit on average because it is uncertain whether the parent chose good actions or not. 
With selection, in contrast, the policies who have higher fitness are overrepresented in the population. 
The parents of those policies are statistically biased to those who chose better action $a_0$ more frequently than others. 
Owing to this surviorship bias, the empirical distribution of parent's action $\pi_B$ works as the gradient towards the better policy.
In the next section, we will make this augment more rigorous and general by show that the update, \eqref{eq:ancestral_learning}, is indeed a stochastic gradient ascent.
Notice that AL is similar to online expert algorithms~\cite{Cesa-Bianchi2007improved} where the policy is modified toward those of the winning players, yet AL can achieve it without knowing who were the winners via survivorship bias.

The ancestral learning can be extended to general MDP.
From Lemma~\ref{thm:population-fitness}, the objective of ARL is $\lambda(\pi)$, which generalizes $J(\pi)$. 
Let the path-wise empirical distribution of parent's states and actions for $i$th policy be 
\begin{align}
    j_{i}[\Xpath, \Apath] = \prod_t \delta_{\suptime{x}{t}, \suptime{x_{p(i)}}{t}} \delta_{\suptime{a}{t}, \suptime{a_{p(i)}}{t}}.
\end{align}
The gradient of the objective function is estimated as
\begin{align}
    \frac{\partial \lambda(\pi_{\suptime{\theta_i}{n}})}{\partial \suptime{\theta_i}{n}} 
    &\approx  \E_{j_{i}[\Xpath, \Apath]}\left[\sum_{t=0}^{T} \nabla_{\suptime{\theta_i}{n}} \log \pi_{\suptime{\theta_i}{n}}(\suptime{a}{t} \mid \suptime{x}{t})\right] \notag \\
    &\propto \sum_{t=0}^{T} \nabla_{\suptime{\theta_i}{n}} \log \pi_{\suptime{\theta_i}{n}}(\suptime{a_{p(i)}}{t} \mid \suptime{x_{p(i)}}{t}),\notag \\
    &=: \ancestralGrad \lambda(\pi_{\suptime{\theta_i}{n}}),
    \label{eq:ancestral_grad}
\end{align}
which we call an ancestral estimator of the gradient.
The ancestral learning for general MDP is obtained as
\begin{align}
    \label{eq:ancestral-learning-theta}
    \suptime{\theta'_i}{n} \gets \suptime{\theta_i}{n} + \alpha  \ancestralGrad \lambda(\pi_{\suptime{\theta_i}{n}}).
\end{align}
Notice that this formula is similar to the estimation of the gradient in the actor-critic algorithm~\cite{sutton2018reinforcement}.
The difference is that our formula is weighted by the survivorship bias via the empirical distribution of the parent $j_{i}[\Xpath, \Apath]$, while the actor-critic algorithm is weighted by an estimator of the advantage function. 
We also note that when we consider the natural gradient ascent with the estimated gradient~\eqref{eq:ancestral_grad} for tableau MDP, we have the same update as~\eqref{eq:ancestral_learning} (See Appendix for the proof).

\begin{algorithm}[tb]
\caption{Ancestral Reinforcement Learning}
\label{alg:arl}
\begin{algorithmic}[1]
\State{Sample initial policies $\{\suptime{\pi_{\theta_i}}{0}\}_{i =1,2,\dots, \popSize-1}$.}
\For{$n =0,1,\dots$}
        \For{$i = 0,1,\dots, \popSize -1$}            
            \State{ $\suptime{\theta_i'}{n} \gets \mathrm{AncestralLearning}(\suptime{\theta_i}{n}, \Xpath, \Apath)$, where $\Xpath$ and $\Apath$ are the parent's history of states and action (Skip this step at $n=0$).}
            \State{Run simulation until time $T$ with policy $\pi_{\suptime{\theta_i'}{n}}$ and observe discounted cumulative reward $R_i$, the history $\Xpath = \{\suptime{x}{0}, \suptime{x}{1}, \dots \}$ of the states, and that of $\Apath = \{\suptime{a}{0}, \suptime{a}{1}, \dots \}$ of actions.}
        \EndFor
        \State{Calculate fitness as $\suptime{f}{n}_i \gets \exp(\beta R_i)$.}
        \State{Select each agent $\pi_{\suptime{\theta_{i}}{n+1}}$ for the next population independently from the mutated population $\{\{\pi_{\suptime{\theta_i'}{n}}\}_{i =0,1,\dots,\popSize-1}$ with probability proportional to $\suptime{f_i}{n}$.}
\EndFor
\end{algorithmic}
\end{algorithm}

\begin{algorithm}[tb]
\caption{Ancestral Learning}
\label{alg:ancestral_learning}
\begin{algorithmic}[1]
\Require{Parameter $\suptime{\theta_i}{n}$ and parent's history of states $\Xpath = \{\suptime{x}{t}\}_{t=0,1,\dots} $ and actions $\Apath = \{\suptime{a}{t}\}_{t=0,1,\dots}$.}
\Ensure{Updated parameter $\suptime{\theta_i'}{n+1}$.}
\State{Calculate the ancestral estimator of the gradient $\ancestralGrad\lambda(\pi_{\suptime{\theta_i}{n}})$ by Equation~\eqref{eq:ancestral_grad}.}
\State{$\suptime{\theta_i'}{n+1} \gets \suptime{\theta_i}{n} + \alpha \ancestralGrad(\pi_{\suptime{\theta_i}{n}})$.}
\end{algorithmic}
\end{algorithm}

\section{Theoretical Basis of ARL}
In this section, we reveal the relationship between the ancestral estimator of the gradient~\eqref{eq:ancestral_grad} and the true gradient through the following two steps:
\begin{enumerate}
    \item We prove that the ancestral estimator of gradient~\eqref{eq:ancestral_grad} is an unbiased estimator of $\nabla_{\theta} \lambda(\pi)$ (Theorem~\ref{thm:ancestral-gradient}).
    \item We clarify that $\lambda(\pi)$ can be interpreted as the objective function $J(\pi)$ with KL regularizer (Theorem~\ref{thm:bellman-back}). 
\end{enumerate}
Overall, it will be shown that ARL optimizes $J(\pi)$ with KL regularization and that ancestral learning is interpreted as a stochastic gradient ascent for this objective function.
Thus, ARL is indeed an unification of POGA and ZOO.

\subsection{Step 1: AL is grandient ascent for popluation fitness}
\label{subsec:ancestral-learning-and-gradient}
We prove the following relationship between ancestral learning and population fitness.
\begin{theorem} \label{thm:ancestral-gradient}
The ancestral estimator $\ancestralGrad \lambda(\pi)$ is propotional to an unbiased estimator of the  gradient of $\lambda(\pi)$:
\begin{align}
    \E[\ancestralGrad \lambda(\pi_{\suptime{\theta_i}{n}})] \propto \nabla_{\suptime{\theta_i}{n}} \lambda(\pi_{\suptime{\theta_i}{n}}), 
\end{align}
\end{theorem}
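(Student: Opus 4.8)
The plan is to identify the probability law that the selection step of Algorithm~\ref{alg:arl} induces on a selected parent's trajectory, and then to recognize the ancestral estimator $\ancestralGrad\lambda$ as a REINFORCE-type (log-derivative) estimator of $\nabla_\theta\lambda$ taken against that very law. Throughout, fix a single agent with policy $\pi = \pi_{\theta}$, where $\theta = \theta_i^{(n)}$.

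First I would make the survivorship bias precise. The parent generates its path $(\Xpath,\Apath)$ according to $\forward^{\pi}$ defined in~\eqref{eq:def-forward}, and the selection step retains it with probability proportional to its fitness $\exp(\beta R[\Xpath,\Apath])$. Hence, as in the population-dynamics reasoning behind~\eqref{eq:def-pd}, in the large-population limit the path of a parent that is \emph{actually} selected is distributed according to the fitness-tilted posterior law
\[
  \back^{\pi}[\Xpath,\Apath] := \frac{\forward^{\pi}[\Xpath,\Apath]\,\exp(\beta R[\Xpath,\Apath])}{\E_{\forward^{\pi}}\!\left[\exp(\beta R[\Xpath,\Apath])\right]},
\]
whose normalizer equals $\exp(\beta\lambda(\pi))$ by the definition~\eqref{eq:population_fitness} of the population fitness. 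Consequently the expectation in the theorem is taken over $\back^{\pi}$, so that $\E[\ancestralGrad\lambda(\pi)] = \E_{\back^{\pi}}\!\left[\sum_{t}\nabla_{\theta}\log\pi_{\theta}(a^{(t)}\mid x^{(t)})\right]$, with the summand evaluated along the selected parent's path.

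Next I would differentiate $\lambda$ directly. Setting $Z(\theta) = \E_{\forward^{\pi}}[\exp(\beta R)]$ so that $\lambda = \beta^{-1}\log Z$, the log-derivative identity gives $\nabla_{\theta}\forward^{\pi} = \forward^{\pi}\,\nabla_{\theta}\log\forward^{\pi}$, and since only the policy factors of~\eqref{eq:def-forward} depend on $\theta$ (the deterministic transition deltas do not), $\nabla_{\theta}\log\forward^{\pi} = \sum_{t}\nabla_{\theta}\log\pi_{\theta}(a^{(t)}\mid x^{(t)})$. Substituting and re-expressing the reward-weighted $\forward^{\pi}$-average as a $\back^{\pi}$-average yields
\[
  \nabla_{\theta}\lambda = \frac{1}{\beta}\,\frac{\nabla_{\theta}Z}{Z} = \frac{1}{\beta}\,\E_{\back^{\pi}}\!\left[\sum_{t}\nabla_{\theta}\log\pi_{\theta}(a^{(t)}\mid x^{(t)})\right].
\]
Comparing with the expression for $\E[\ancestralGrad\lambda(\pi)]$ obtained in the previous step gives $\E[\ancestralGrad\lambda(\pi)] = \beta\,\nabla_{\theta}\lambda(\pi)$, which is the claimed proportionality.

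The hard part will be the first step, namely rigorously justifying that the selected parent's path follows $\back^{\pi}$ rather than the naive $\forward^{\pi}$: one must show that as $\popSize\to\infty$ the selection probability of each path concentrates at a value proportional to $\forward^{\pi}\exp(\beta R)$, mirroring the argument behind Lemma~\ref{thm:population-fitness}. Two subsidiary points I would also address are (i) the finite horizon $T$ in~\eqref{eq:ancestral_grad} versus the infinite-horizon $R$, which I would control via the discount factor $\gamma$ so that the truncation tail is negligible, and (ii) the apparent mismatch that the ancestral sum is undiscounted while $R$ carries the factor $\gamma^{t}$ --- this is in fact consistent, because the discount enters only through the fitness weighting inside $\back^{\pi}$, whereas $\nabla_{\theta}\log\forward^{\pi}$ contributes each term $\nabla_{\theta}\log\pi$ with unit weight.
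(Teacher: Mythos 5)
Your proposal is correct and follows essentially the same route as the paper: the paper likewise first identifies the selected parent's path law as the fitness-tilted distribution $\back^{\pi}\propto e^{\beta R}\forward^{\pi}$ (its Eq.~\eqref{eq:def-back}, justified via the large-population dynamics in the appendix) and then proves a policy-gradient-type identity $\nabla_\theta\lambda=\frac{1}{\beta}\E_{\back^{\pi}}[\sum_t\nabla_\theta\log\pi_\theta]$ (Proposition~\ref{thm:theta-gradient-pop}) by differentiating $\exp[\beta\lambda]$ with the log-derivative trick. Your identification of the large-$\popSize$ concentration argument as the delicate step, and your explicit constant $\beta$ in the proportionality, are both consistent with the paper's treatment.
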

This theorem has been proven for a specific Markov process~\cite{nakashima2022acceleration}, and we generalize the result for MDP.

To prove this theorem, let us first characterize the expectation of $j_{i}[\Xpath, \Apath]$ in the definition of $\ancestralGrad \lambda(\pi_{\suptime{\theta_i}{n}})$.
If the population size is sufficiently large, we have sufficient number of members in the population, whose parents have the same policy $\pi$ as the $i$th one and generated the same state action history, $j_{\pi}[\Xpath, \Apath]=j_{i}[\Xpath, \Apath]$.
Since the fitness is defined by $\exp(\beta R[\Xpath, \Apath])$, the conditional probability that history $j_{\pi}[\Xpath, \Apath]$ is observed becomes:
\begin{align}
    \label{eq:def-back}
    \back^{\pi}[\Xpath, \Apath] :=\E\left[ j_{\pi}[\Xpath, \Apath] \right] \propto e^{\beta R[\Xpath, \Apath]} \forward^{\pi}[\Xpath, \Apath],
\end{align}
 (See Appendix for the proof).
The probabilities, $\forward$ and $\back$, are called forward and  backward probabilities, respectively.

Next, we differentiate the population fitness.
We can prove a generalization of the policy gradient theorem by direct calculation.
\begin{proposition}
\label{thm:theta-gradient-pop}
    \begin{align}
        \nabla_{\theta}\lambda(\pi_{\theta}) =\frac{1}{\beta} \E_{\back^{\pi_{\theta}}[\Xpath, \Apath]} \left[ \sum_{t=0}^{T} \nabla_\theta \log \pi_{\theta}(\suptime{a}{t}, \suptime{x}{t})  \right].
    \end{align}
\end{proposition}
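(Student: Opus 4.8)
The plan is to recognize $\lambda(\pi_\theta)$ as a scaled logarithm of a partition function and differentiate it with the log-derivative (REINFORCE) trick. By the definition in \eqref{eq:population_fitness}, $\lambda(\pi_\theta) = \frac{1}{\beta}\log Z(\theta)$ where $Z(\theta) := \E_{\forward^{\pi_\theta}}[e^{\beta R[\Xpath,\Apath]}]$, so $\nabla_\theta \lambda(\pi_\theta) = \frac{1}{\beta}\, \nabla_\theta Z(\theta)/Z(\theta)$. The entire computation thus reduces to evaluating $\nabla_\theta Z(\theta)$ and identifying the normalized result with an expectation under the backward probability $\back^{\pi_\theta}$ defined in \eqref{eq:def-back}.

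First I would write $Z(\theta) = \sum_{\Xpath,\Apath} e^{\beta R[\Xpath,\Apath]}\,\forward^{\pi_\theta}[\Xpath,\Apath]$ and move the gradient inside the path sum. Crucially, neither $R[\Xpath,\Apath]$ nor the transition factors $\delta_{\suptime{x}{t+1},f(\suptime{x}{t},\suptime{a}{t})}$ appearing in $\forward^{\pi_\theta}$ depend on $\theta$, so only the policy factors carry $\theta$-dependence. Applying $\nabla_\theta \forward^{\pi_\theta} = \forward^{\pi_\theta}\,\nabla_\theta\log\forward^{\pi_\theta}$ together with $\log\forward^{\pi_\theta}[\Xpath,\Apath] = \sum_t \log\pi_\theta(\suptime{a}{t}\mid\suptime{x}{t}) + (\text{$\theta$-independent terms})$ gives $\nabla_\theta\log\forward^{\pi_\theta} = \sum_t \nabla_\theta\log\pi_\theta(\suptime{a}{t}\mid\suptime{x}{t})$.

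Substituting back yields $\nabla_\theta Z(\theta) = \sum_{\Xpath,\Apath} e^{\beta R[\Xpath,\Apath]}\,\forward^{\pi_\theta}[\Xpath,\Apath]\sum_t \nabla_\theta\log\pi_\theta(\suptime{a}{t}\mid\suptime{x}{t})$. The final step is to divide by $Z(\theta)$ and recognize that $e^{\beta R[\Xpath,\Apath]}\forward^{\pi_\theta}[\Xpath,\Apath]/Z(\theta)$ is exactly the backward probability $\back^{\pi_\theta}[\Xpath,\Apath]$, whose normalization constant is precisely $Z(\theta)=e^{\beta\lambda(\pi_\theta)}$. This turns the normalized sum into $\E_{\back^{\pi_\theta}}[\sum_t \nabla_\theta\log\pi_\theta(\suptime{a}{t}\mid\suptime{x}{t})]$, and multiplying by $1/\beta$ delivers the claimed identity.

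The main obstacle I anticipate is not the algebra but justifying the interchange of $\nabla_\theta$ with the path sum, which over a long horizon requires integrability of $e^{\beta R}$ and uniform control of $\nabla_\theta\log\pi_\theta$; the truncation at $T$ in the statement makes this manageable, and in the tableau case the sums are finite. The one piece of bookkeeping worth stating explicitly is that the normalizer of $\back^{\pi_\theta}$ is $Z(\theta)$ itself, so it cancels cleanly against the $1/Z(\theta)$ produced by the logarithmic derivative, leaving no residual $\nabla_\theta Z$ term.
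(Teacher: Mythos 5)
Your proposal is correct and follows essentially the same route as the paper's proof: both differentiate the partition function $\sum_{\Xpath,\Apath} e^{\beta R[\Xpath,\Apath]}\forward^{\pi_\theta}[\Xpath,\Apath] = e^{\beta\lambda(\pi_\theta)}$, apply the log-derivative trick to the product of policy factors (the transition deltas and $R$ being $\theta$-independent), and identify the normalized tilted measure with $\back^{\pi_\theta}$. Writing it as $\nabla_\theta\lambda = \frac{1}{\beta}\nabla_\theta Z/Z$ rather than solving $\nabla_\theta e^{\beta\lambda} = \beta e^{\beta\lambda}\nabla_\theta\lambda$ is only a cosmetic difference, and your remark on interchanging $\nabla_\theta$ with the path sum is a reasonable extra caveat the paper leaves implicit.
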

See Appendix for the proof.
By combining these two results, we have Theorem~\ref{thm:ancestral-gradient}.
It should be noted that $\ancestralGrad \lambda(\pi_{\suptime{\theta_i}{n}})$ without expectation could also be a good approximation for $\nabla_{\suptime{\theta_i}{n}} \lambda(\pi_{\suptime{\theta_i}{n}})$ if $T$ is large because of the law of large number for $T$.

\subsection{Step 2: Implicit KL regularization}
We establish a connection of population fitness $\lambda(\pi)$ with the original objective function $J(\pi)$.
To see this, we introduced a generalization of the value function for population fitness.
We define a generalized $V$-function at time $t$ by
\begin{align}
    &\popV{t}(x) := \frac{1}{\beta} \log \E_{\forward^{\pi}[\cdot\mid \suptime{x}{t}=x]} \left[ \exp\left(\beta \suptime{R}{t:}[\suptime{\Xpath}{t:},\suptime{\Apath}{t:}] \right) \right],\\
    &\suptime{R}{t:}[\suptime{\Xpath}{t:},\suptime{\Apath}{t:}] :=  \sum_{s\ge t} \gamma^{s} r(\suptime{x}{s}, \suptime{a}{s}),
\end{align}
where $x$ is the state at time $t$.
From the definition, we can easily see that $\lambda(\pi) = \popV{0}(\suptime{x}{0})$.

Since the definition of $\popV{t}$ is in the logarithmic sum-exp form, its recursive equation cannot be obtained in a usual way.
Nonetheless, we can derive a Bellman-type recursive equation for $\popV{t}$ using the following variational representation:
\begin{align}
    \label{eq:variational-representation}
    \popV{t}(x)&= \max_{\mathrm{P}[\Xpath, \Apath]}\left\{\E_{\mathrm{P}}\left[\suptime{R}{t:}[\suptime{\Xpath}{t:},\suptime{\Apath}{t:}]\right]\right.\\
    - \frac{1}{\beta} &\left.\KL{\mathrm{P}[\suptime{\Xpath}{t+1:},\suptime{\Apath}{t:} \mid \suptime{x}{t}]}{\forward[\suptime{\Xpath}{t+1:},\suptime{\Apath}{t:} \mid \suptime{x}{t}]}\right\},  
\end{align}
where $\mathrm{P}$ runs over all probability distributions for $\Xpath$ and $\Apath$, and $\KL{\cdot}{\cdot}$ is the KL-divergence.
The maximizer is 
\begin{align}
    \label{eq:truncated-back}
    &\back[\suptime{\Xpath}{t+1:},\suptime{\Apath}{t:} \mid \suptime{x}{t}] :=  \frac{\back[\Xpath, \Apath]}{\sum_{\suptime{\Xpath}{t+1:}, \suptime{\Apath}{t:}} \back[\Xpath, \Apath] }\\
    &\propto \exp(\beta \suptime{R}{t:}[\suptime{\Xpath}{t:},\suptime{\Apath}{t:}]) \forward[\suptime{\Xpath}{t+1:},\suptime{\Apath}{t:} \mid \suptime{x}{t}].
\end{align}
For the derivation of this explicit formula, see Appendix.
Since the maximizer is $\back[\suptime{\Xpath}{t+1:},\suptime{\Apath}{t:} \mid \suptime{x}{t}]$, we have
\begin{align}
    \label{eq:pop-v-by-back-prob}
    \popV{t}&(x)= \E_{\back}\left[\suptime{R}{t:}[\suptime{\Xpath}{t:},\suptime{\Apath}{t:}]\right]\\
    &-\frac{1}{\beta}\KL{\back[\suptime{\Xpath}{t+1:},\suptime{\Apath}{t:} \mid \suptime{x}{t}]}{\forward[\suptime{\Xpath}{t+1:},\suptime{\Apath}{t:} \mid \suptime{x}{t}]}.
\end{align}
Using this expression, we obtain a recursive equation:
\begin{theorem}
\label{thm:bellman-back}
The generalized V-function $\popV{t}(\suptime{x}{t})$ satisfies the following Bellman-type equation with KL regularization:
  
\begin{align}
        \label{eq:bellman-back}
    \popV{t}(\suptime{x}{t}) &=
    \E_{\back(\suptime{a}{t}\mid \suptime{x}{t})}\left[\gamma^{t} r(\suptime{x}{t}, \suptime{a}{t}) \right.\\
    &\left.-\frac{1}{\beta}\log \frac{\back(\suptime{a}{t}\mid \suptime{x}{t})}{\pi(\suptime{a}{t}\mid \suptime{x}{t})} + \popV{t+1}(f(\suptime{x}{t}, \suptime{a}{t}))\right]
\end{align}
\end{theorem}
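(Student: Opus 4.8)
The plan is to derive the recursion directly from the closed-form expression \eqref{eq:pop-v-by-back-prob} for $\popV{t}$ in terms of the backward probability, peeling off a single time step. Beyond what the excerpt already establishes, the only genuinely new ingredient needed is a self-consistency (Markov) property of the backward distributions: the tail of the time-$t$ backward law, after conditioning on the state $\suptime{x}{t+1}$ reached at time $t+1$, must coincide with the time-$(t+1)$ backward law $\back[\suptime{\Xpath}{t+2:},\suptime{\Apath}{t+1:}\mid \suptime{x}{t+1}]$ defined through \eqref{eq:truncated-back}. Once this is in hand, the Bellman equation \eqref{eq:bellman-back} follows by a chain-rule bookkeeping argument.

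Concretely, I would first split the truncated return as $\suptime{R}{t:}=\gamma^{t} r(\suptime{x}{t},\suptime{a}{t})+\suptime{R}{t+1:}$, so that the expectation $\E_{\back}[\suptime{R}{t:}]$ in \eqref{eq:pop-v-by-back-prob} separates into an instantaneous term $\E_{\back(\suptime{a}{t}\mid\suptime{x}{t})}[\gamma^{t} r(\suptime{x}{t},\suptime{a}{t})]$ and a tail term. Next I would apply the chain rule for the KL divergence to the path-wise divergence in \eqref{eq:pop-v-by-back-prob}: because the transition is deterministic, fixing $\suptime{a}{t}$ fixes $\suptime{x}{t+1}=f(\suptime{x}{t},\suptime{a}{t})$, so the joint divergence factorizes into a one-step contribution $\E_{\back(\suptime{a}{t}\mid\suptime{x}{t})}[\log(\back(\suptime{a}{t}\mid\suptime{x}{t})/\forward(\suptime{a}{t}\mid\suptime{x}{t}))]$ plus the $\back(\suptime{a}{t}\mid\suptime{x}{t})$-expectation of the tail divergence conditioned on $\suptime{x}{t+1}$. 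Using $\forward(\suptime{a}{t}\mid\suptime{x}{t})=\pi(\suptime{a}{t}\mid\suptime{x}{t})$ turns the one-step term into the regularizer appearing in \eqref{eq:bellman-back}.

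The main step is then to recognize that, after conditioning on $\suptime{a}{t}$, the sum of the tail reward term and the tail KL term is precisely $\popV{t+1}(f(\suptime{x}{t},\suptime{a}{t}))$ by \eqref{eq:pop-v-by-back-prob} evaluated at time $t+1$. This is exactly where the self-consistency property enters, and I expect it to be the main obstacle: one must verify that conditioning the time-$t$ backward distribution on $\suptime{x}{t+1}$ yields the genuine time-$(t+1)$ backward distribution. Via \eqref{eq:def-back} and \eqref{eq:truncated-back}, this reduces to checking that the constant factors $\exp(\beta\gamma^{t} r(\suptime{x}{t},\suptime{a}{t}))$ and $\pi(\suptime{a}{t}\mid\suptime{x}{t})$ drop out under conditioning, together with the forward Markov property $\forward[\suptime{\Xpath}{t+2:},\suptime{\Apath}{t+1:}\mid \suptime{x}{t},\suptime{a}{t}]=\forward[\suptime{\Xpath}{t+2:},\suptime{\Apath}{t+1:}\mid \suptime{x}{t+1}]$, which is immediate from \eqref{eq:def-forward}. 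Collecting the instantaneous reward, the regularizer, and the reconstructed $\popV{t+1}$ term under the single expectation $\E_{\back(\suptime{a}{t}\mid\suptime{x}{t})}$ then yields \eqref{eq:bellman-back}.

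As an alternative route that sidesteps the backward-distribution bookkeeping, I could instead factor the forward law in the definition of $\popV{t}$ to obtain the log-sum-exp recursion $\popV{t}(\suptime{x}{t})=\frac{1}{\beta}\log\E_{\pi(\suptime{a}{t}\mid\suptime{x}{t})}[\exp(\beta\gamma^{t} r(\suptime{x}{t},\suptime{a}{t})+\beta\popV{t+1}(f(\suptime{x}{t},\suptime{a}{t})))]$, and then apply the Gibbs variational identity (the same identity underlying \eqref{eq:variational-representation}), whose maximizer is the one-step backward conditional $\back(\suptime{a}{t}\mid\suptime{x}{t})$. Either route delivers \eqref{eq:bellman-back}, but the first is more directly aligned with the representation already derived in the excerpt.
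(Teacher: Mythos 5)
Your main route is essentially the paper's own proof: the authors likewise start from \eqref{eq:pop-v-by-back-prob}, split $\suptime{R}{t:}$ into the instantaneous reward plus the tail, apply the chain rule to the path-wise KL divergence, and use exactly the backward-law factorization $\back[\suptime{\Xpath}{t+1:}, \suptime{\Apath}{t:}\mid \suptime{x}{t}]=\back[\suptime{\Xpath}{t+2:}, \suptime{\Apath}{t+1:}\mid \suptime{x}{t+1}]\,\delta_{\suptime{x}{t+1}, f(\suptime{x}{t}, \suptime{a}{t})}\,\back[\suptime{a}{t}\mid \suptime{x}{t}]$ (your ``self-consistency'' property, which follows from \eqref{eq:truncated-back} just as you describe) to reassemble the tail reward and tail KL into $\popV{t+1}(f(\suptime{x}{t},\suptime{a}{t}))$. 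Your reconstruction is correct, and the point you flag as the main obstacle is indeed the one step the paper handles explicitly.
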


This recursive formula is similar to that of the usual Bellman equation with KL regularization in ~\eqref{eq:max-ent-v-bellman}, yet they are different in two points.
First, the expectation is taken for the backward probability $\back$ in~\eqref{eq:bellman-back} whereas it is for the policy $\pi_\theta$ in~\eqref{eq:max-ent-v-bellman}.
Second, $\popV{t}$ in~\eqref{eq:bellman-back} is time-dependent while the equation in ~\eqref{eq:max-ent-v-bellman} is time-independent.
Although these differences exist, we can regard $\lambda(\pi)$ as a variant of the objective functions of entropy-regularized RL.
In summary, Theorem~\ref{thm:bellman-back} indicates that the diversity of the population induces KL-regularization, which accelerates the exploration.
COMM{SN}{We also note that this form of variational problem and KL-regularization is similar to trust region policy optimization~\cite{schulman2015trust}.

\section{Experimental Study}
To demonstrate that ARL unifies ZOO and POGA, we evaluated the performance of these algorithms for two setups.
The first one is tableau MDP, for which gradient estimation is more important than populational exploration.
The second one is Cart Pole problem, in which the objective has rugged landscape and thus populational exploration becomes more crucial than gradient estimation.

\subsection{Tableau MDP}
We consider the simplest case where there are two states $\Xset = \{x_0, x_1\}$ and two actions $\Aset = \{a_0, a_1\}$.
The initial state is $x_0$.
When the action is $a_1$, no state transition occurs.
When the action is $a_0$, the state transits to the other state.
The initial policy is $\pi(a) = 0.5$ for all $a \in \Aset$.
The reward is $1$ when the state is $x_0$ and $0$ otherwise.
The discount rate $\gamma$ and the time horizon $T$ are $0.9$ and $30$, respectively. 
Then, the optimal value of the objective becomes around $9.57$.
Since this problem has an unimodel objective, gradient estimation facilitates the optimization.
For all algorithms, the population size is fixed to $1000$ for comparison.

Figure~\ref{fig:tableu-mdp} shows the trajectories of the maximum cumulative rewards among the population at each iteration for the three algorithms.
The average (solid line) and standard deviation (shaded zone) of the cumulative reward are obtained by five independent experiments with different seeds.
We found that ZOO and ARL, which estimate gradient, can achieve the optimal value whereas POGA fails to achieve it.
Although the average of the cumulative reward of ZOO fluctuates around 9,  each trajectory of the cumulative reward of ZOO touches the optimal value.
Moreover, ARL has more robust and smooth learning trajectory than ZOO. Note that ZOO becomes more robust and stable if larger population size is used and learning rate is fine-tuned.
These results demonstrate that ARL can leverage gradient information as good as ZOO does to achieve a stable and prompt convergence to the optimal solution. 

\begin{figure}[t]
  \begin{center}
        \includegraphics[width=1.0\linewidth]{./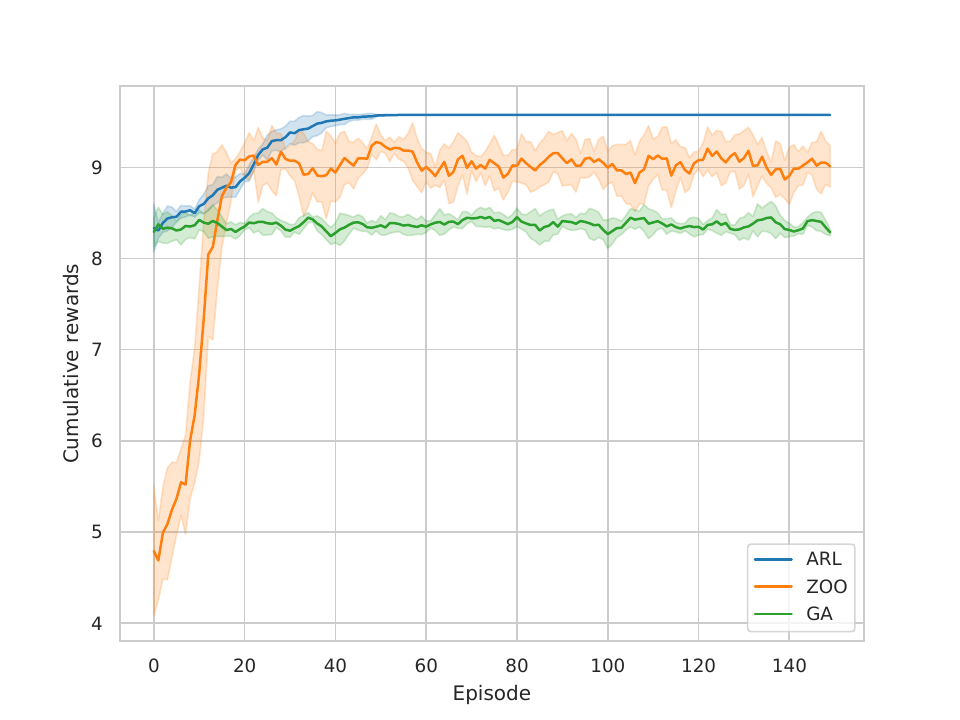}
   \end{center}
    \caption{Evaluation of ZOO (orange), POGA (green), and ARL (blue) for a tabeau MDP problem whose optimal cumulative reward is around $9.57$.
    The horizontal axis is the number of episodes whereas the vertical one is the cumulative reward of the best policy in the population at each iteration.
    Each solid line shows the average of the maximum cumulative reward  obtained by five independent trials. 
    The shaded zones around the curves are the standard deviation. 
    For visualization, we take a moving average of the cumulative reward with window size $5$.
    Both ARL and ZOO achieve the optimal value, whereas POGA fails to find the optimal policy. 
    We note that ZOO achieve the optimal value several times if we plot the trajectories of each trial.
    }
    \label{fig:tableu-mdp}
\end{figure}

\subsection{Cartpole}
\begin{figure}[t]
  \begin{center}
        \includegraphics[width=1.0\linewidth]{./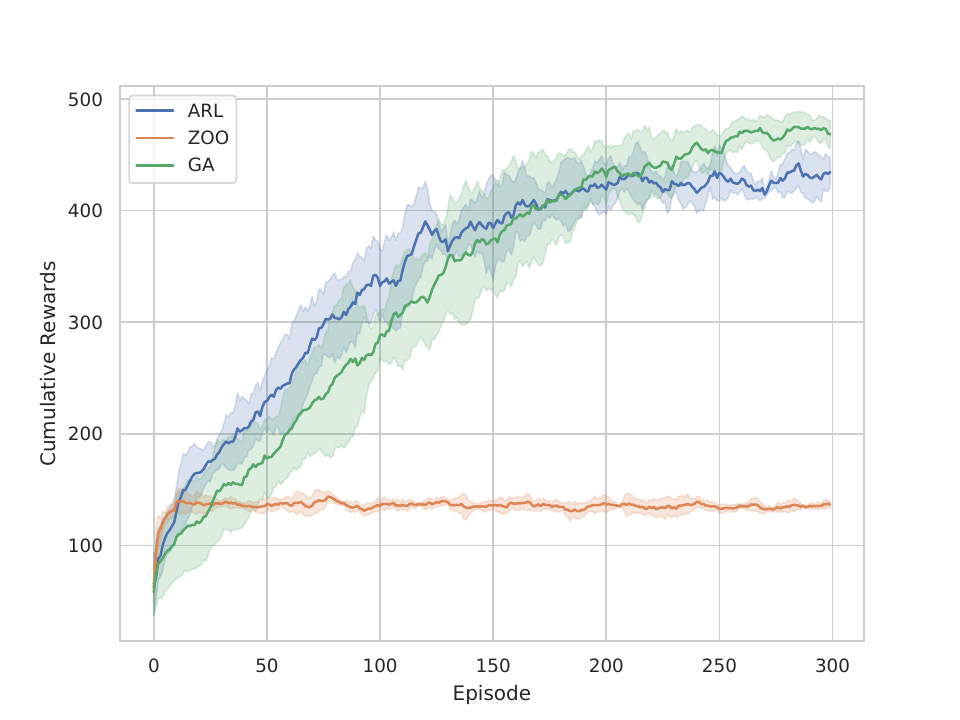}
   \end{center}
    \caption{Evaluation of ZOO (orange), POGA (green), and ARL (blue) for Cart Pols problem in OpenAI Gymnasium where the maximum cumulative reword is $500$. 
    The format of the figure follows those in Figure ~\eqref{fig:tableu-mdp} for ZOO and POGA.  
    For ARL, the average and variance of the trajectories are obtained from the four trials out of five, in which ARL succeeded to achieve almost the optimal value.
    For visualization, we take moving average whose window size is ten.
    }
    \label{fig:experiment}
\end{figure}
Next, we evaluate ZOO, POGA, and ARL using Cart Pole benchmark in the OpenAI Gymnasium~\cite{towers2024gymnasium}.
In this problem, a pole is attached to a friction-free cart and the objective is to keep it upright.
The available actions are to push the cart to the left ($a_{\mathrm{left}}$) or to the right ($a_{\mathrm{right}}$) with a constant power.
The simulation halts either when the pole falls or when the cart moves too far from the initial position.
The cumulative reward is the duration of time until the simulation ends, and thus the maximum cumulative reward is equal to the time horizon $T=500$.

We trained the following linear policy that chooses the direction of the action.
The state $x$ is a vector consisting of the four observables: cart position, cart velocity, pole angle, and pole angular velocity.
The policy is then defined by
\begin{align}
    \pi(a_\mathrm{left} \mid x) = \varsigma(\theta \cdot x),\\
    \varsigma(z) = 1 / (1 + e^{-z}),
\end{align}
where $\theta \in \mathbb{R}^4$ is the parameter.
The population size is $300$ for all algorithms. 
We generate a random $\theta$ for each agent in the population.

We conducted five independent trials with different seeds for the three algorithms.
Because this problem has a rugged objective function, learning trajectory can be stuck at a local maximum. 
Figure~\ref{fig:experiment} shows the trajectories of the maximum cumulative rewards among the population at each iteration where solid lines and shaded regions are the average and standard deviation of the cumulative reward computed by the five trials, respectively.
For all five trials, POGA could finally achieve the almost optimal value whereas ZOO could not escape from local maximums.
ARL demonstrates the behaviors in between them. For four out of the five trials, ARL could achieve almost optimal value while it failed to escape from a local maximum once. 
Figure~\ref{fig:experiment} shows the average trajectory of ARL computed from the four successful trials, which are close to those of POGA. 
This result indicates that ARL inherits the exploratory ability of POGA, enabling ARL to escape from local minima more likely even in the rugged objective landscape.

Overall, the numerical experiments demonstrate that ARL can attain the best of the two algorithms, POGA and ZOO, by integrating populational exploration and gradient estimation via the ancestral learning.

\section{Conclusion}
In this paper, we proposed ARL that unifies ZOO and GA.
Its theoretical basis was established by showing that agents in ARL can estimate the gradient from the ancestor's history via ancestral learning.
We also clarified that the exploratory search of ARL by population is linked to an implicit KL regularization of the objective function.
Numerical experiments demonstrated that ARL indeed combines the best of two algorithms, ZOO and POGA. 

In future work, we may improve ARL using a more efficient learning rule than ancestral learning.
As the ancestral learning is a variant of policy gradient, other methods (e.g. Q-learning) may be employed to learn from ancestral history.
To this end, we can exploit the similarity between the Bellman type recursion ~\eqref{eq:bellman-back} for the generalized V-function and the usual Bellman equation~\eqref{eq:max-ent-v-bellman}.
In addition, we may understand and evaluate the efficiency of existing algorithms in which population algorithms are incorporated with RL~\cite{khadka2018evolution, callaghan2023evolutionary}.
In any case, our theory can serve as a firm basis for bridging and unifying single-agent algorithms and population algorithms.

\section{Acknowledgements}
The authors thank Ignacio Madrid for discussions.
This research is supported by Toyota Konpon Research Institute, Inc., JST CREST JPMJCR2011, and JSPS KAKENHI Grant Numbers 24H01465 and 24H02148.

\bibliography{main}

\appendix
\onecolumn

\section*{Appendix}

\subsection{Population dynamics}
We derive the time evolution of $\suptime{p}{n}$ when the size of the population is infinite.
Let $\{\pi_i\}_{i=1,2,\dots, \popSize-1}$, $\Xpath_i$, and $\Apath_i$ be the policies in the population at the $n$-th iteration and their history of the states and actions, respectively.
Let the expected fraction $\suptime{p}{n+1}(\pi, \Xpath, \Apath \mid \mathcal{F}_n)$ of the policies whose parent's history of states and actions be $\Xpath$ and $\Apath$ at the next iteration.
Here, $\suptime{p}{n+1}$ is conditioned by the event $\mathcal{F}_n$ until the $n$-th iteration.
We note that $\suptime{p}{n+1}=\E_{\mathcal{F}_n}\left[ \suptime{p}{n+1}(\cdot \mid \mathcal{F}_n) \right] $.
By the definition of the algorithm, we have the following.
\begin{align}
    \suptime{p}{n+1}(\pi, \Xpath, \Apath \mid \mathcal{F}_n) = \frac{\exp({\beta R[\Xpath,\Apath]})}{\E_{\suptime{j}{n}}\left[\exp({\beta R[\Xpath,\Apath]})\right] } \suptime{j}{n}(\pi, \Xpath, \Apath),
\end{align}

where $j$ is the empirical distribution of the policies at time $n$ defined by
\begin{align}
    \suptime{j}{n}(\pi, \Xpath, \Apath ) =\frac{1}{\popSize} \sum_i \delta_{\pi, \pi_i} \delta_{\Xpath, \Xpath_i} \delta_{\Apath, \Apath_i},
\end{align}
and $\delta$ is the Dirac's delta.
When the size of the population is infinite, we have the following two approximations.
First, $\suptime{p}{n+1}(\cdot \mid \mathcal{F}_n)$ converges to a deterministic value independent of $\mathcal{F}_n$ and
\begin{align}
    \suptime{p}{n+1}(\pi, \Xpath, \Apath \mid \mathcal{F}_n) = \suptime{p}{n+1}(\pi, \Xpath, \Apath).
\end{align}
Second, $\suptime{j}{n}(\pi, \Xpath, \Apath)$ also converges to its expectation  due to the law of large number:
\begin{align}
    \suptime{j}{n}(\pi, \Xpath, \Apath) \approx \suptime{p}{n}(\pi) \forward^\pi[\Xpath, \Apath],
\end{align}
from the Markov property of MDP.
Here, $\forward^\pi$ is the forward probability for policy $\pi$ defined in \eqref{eq:def-forward}.

Using these approximations, we have the approximate time evolution of $\suptime{p}{n}$ as follows:
\begin{align}
    \label{eq:def-population-dynamics-approx-detail}
    \suptime{p}{n+1}(\pi, \Xpath, \Apath) =   \frac{\exp(\beta R[\Xpath, \Apath])}{\E_{\suptime{p}{n}(\pi')}\left[\E_{\forward^{\pi'}}\left[\exp(\beta R[\Xpath, \Apath])\right] \right]} \suptime{p}{n}(\pi)\forward^\pi[\Xpath, \Apath] .
\end{align}
In particular, we have
\begin{align}
    \label{eq:def-population-dynamics-approx}
    \suptime{p}{n+1}(\pi) =   \frac{\E_{\forward^{\pi}}\left[\exp(\beta R[\Xpath, \Apath])\right]}{\E_{\suptime{p}{n}(\pi')}\left[\E_{\forward^{\pi'}}\left[\exp(\beta R[\Xpath, \Apath])\right] \right]} \suptime{p}{n}(\pi)
\end{align}
by taking expectation on $\Xpath$ and $\Apath$.

\subsection{Proof that natural gradient with~\eqref{eq:ancestral_grad} is equivalent to \eqref{eq:ancestral_learning}}
The Fisher's metrics $I(\pi)$ for conditional probability $\pi(\cdot \mid x)$ is 
\begin{align}
    I(\pi)_{a, a'} = \delta_{a,a'} \pi(a \mid x)^{-1}.
\end{align}
Therefore, the natural gradient with the estimated gradient~\eqref{eq:ancestral_grad} is
\begin{align}
        \suptime{\pi_i}{n+1}(\cdot \mid x) \gets \suptime{\theta_i}{n} + \alpha  I(\suptime{\pi_{i}}{n}(\cdot \mid x))^{-1} \ancestralGrad \lambda(\pi_{\suptime{\pi_i}{n}}),
\end{align}
for all $x \in \Xset$.
Let us calculate the $a$-th element of the second term:
\begin{align}
    &(I(\suptime{\pi_{i}}{n}(\cdot \mid x))^{-1} \ancestralGrad \lambda(\pi_{\suptime{\pi_i}{n}}))_a\\
    &= \suptime{\pi_{i}}{n}(a \mid x) \times \frac{1}{T} \sum_{t=0}^{T} \nabla_{\suptime{\theta_i}{n}} \log \pi_{\suptime{\theta_i}{n}}(\suptime{a_{p(i)}}{t} \mid \suptime{x_{p(i)}}{t})\\
    &=  \suptime{\pi_{i}}{n}(a \mid x) \times \frac{1}{T} \sum_{t=0}^{T} \delta_{a, \suptime{a_{p(i)}}{t}}\frac{1}{\suptime{\pi_{i}}{n}(a \mid x)}\\
    &=  \frac{1}{T} \sum_{t=0}^{T} \delta_{a, \suptime{a_{p(i)}}{t}}\\
    &\propto  \suptime{\pi_B}{n}(a \mid x).
\end{align}
This update rule is equivalent to~\eqref{eq:ancestral_learning}.

\subsection{Proof of~\eqref{eq:def-back}}
This equation easily follows from~\eqref{eq:def-population-dynamics-approx-detail}.
Indeed, we have
\begin{align}
    \back[\Xpath, \Apath] = \frac{\suptime{p}{n+1}(\pi, \Xpath, \Apath)}{\suptime{p}{n+1}(\pi)} \propto \exp(\beta R[\Xpath, \Apath]) \forward^\pi[\Xpath, \Apath].
\end{align}

\subsection{Proof of Proposition~\ref{thm:theta-gradient-pop}}
\label{sec:proof-of-policy-gradient-lambda}
\begin{proof}
For simplicity, we denote $\suptime{\theta_i}{n}$ by $\theta$.
By definition,
\begin{align}
    &\exp[\beta \lambda(\pi_{\theta})] = \sum_{\Xpath, \Apath} e^{\beta R[\Xpath, \Apath]} \forward^{\pi_{\theta}}[\Xpath, \Apath].
\end{align}
By differentiating the both hand side by $\theta$, we have
\begin{align}
    \label{eq:prove-policy-gradient}
    \nabla_\theta  \exp[\beta \lambda(\pi_{\theta})]  &= \sum_{\Xpath, \Apath} e^{\beta R[\Xpath, \Apath]} \nabla_\theta  \forward^{\pi_{\theta}}[\Xpath, \Apath],
\end{align}
since $R[\Xpath, \Apath]$ is independent of $\theta$.
By~\eqref{eq:def-forward}, we have
\begin{align}
    & \nabla_\theta \forward^{\pi_{\theta}}[\Xpath, \Apath] \\
    &=\nabla_\theta \left[ \prod_t  \pi_\theta(\suptime{a}{t} \mid \suptime{x}{t}) \right],\\
    &= \sum_{t=0}^{T} \left[ \nabla_\theta   \pi_\theta(\suptime{a}{t} \mid \suptime{x}{t}) \right] \left[\prod_{s \neq t}  \pi_\theta(\suptime{a}{s} \mid \suptime{x}{s}) \right]
\end{align}
on the path where $\suptime{x}{t+1} = f(\suptime{x}{t}, \suptime{a}{t})$ for $t=0,1,\dots$.
By using the log derivative trick:
\begin{align}
   \nabla_\theta  \log \pi_\theta(a \mid x) = \frac{ \nabla_\theta  \pi_\theta(a \mid x)}{\pi_\theta(a \mid x)},
\end{align}
we have
\begin{align}
    &\nabla_\theta \forward^{\pi_{\theta}}[\Xpath, \Apath] \\
    &= \sum_{t=0}^{T} e^{\beta R[\Xpath, \Apath]} \left(\nabla_\theta  \log \pi_\theta(\suptime{a}{t} \mid \suptime{x}{t}) \right)  \forward^{\pi_{\theta}}[\Xpath, \Apath]\\
    &=\exp[\beta \lambda(\pi_{\theta})] \sum_t  \left(\nabla_\theta  \log \pi_\theta(\suptime{a}{t} \mid \suptime{x}{t}) \right) \back^{\pi_{\theta}}[\Xpath, \Apath].
\end{align}
By combining this equation to~\eqref{eq:prove-policy-gradient}, we have
\begin{align}
    \nabla_\theta \exp[\beta \lambda(\pi_{\theta})] = \beta \exp[\beta \lambda(\pi_{\theta})] \nabla_\theta \lambda(\pi_{\theta})=\exp[\beta \lambda(\pi_{\theta})] \E_{\back^{\pi_{\theta}}[\Xpath, \Apath]} \left[ \sum_{t=0}^{T} \nabla_\theta \log \pi_{\theta}(\suptime{a}{t}, \suptime{x}{t})  \right]
\end{align}
Thus, 
\begin{align}
    \nabla_\theta \lambda(\pi_\theta) =\frac{1}{\beta} \E_{\back^{\pi_{\theta}}[\Xpath, \Apath]} \left[ \sum_{t=0}^{T} \nabla_\theta \log \pi_{\theta}(\suptime{a}{t}, \suptime{x}{t})  \right]
\end{align}
\end{proof}

\subsection{Explicit formula for the conditioned backward probability (Eq.~\eqref{eq:truncated-back})}
We prove~\eqref{eq:truncated-back}.
Concretly, we prove the following two equations:
\begin{align}
    &\frac{\back[\Xpath, \Apath]}{\sum_{\suptime{\Xpath}{t+1:}, \suptime{\Apath}{t:}} \back[\Xpath, \Apath] }:= \back[\suptime{\mathbb{X}}{t+1:}, \suptime{\mathbb{A}}{t:} | \suptime{\Xpath}{:t}, \suptime{\Apath}{:t-1}]
    = \frac{\exp(\beta \suptime{R}{t:}[\suptime{\Xpath}{t:},\suptime{\Apath}{t:}]) \forward[\suptime{\Xpath}{t+1:},\suptime{\Apath}{t:} \mid \suptime{x}{t}]}{\exp(\beta \popV{t}(x) )},\\
    &\back[\suptime{\mathbb{X}}{t+1:}, \suptime{\mathbb{A}}{t:} | \suptime{\Xpath}{:t}, \suptime{\Apath}{:t-1}]= \back[\suptime{\Xpath}{t+1:},\suptime{\Apath}{t:} \mid \suptime{x}{t}].
\end{align}
The second equation means the independence on $\suptime{\Xpath}{:t-1}, \suptime{\Apath}{:t-1}$ given $\suptime{x}{t}$, which automatically follows from the first equation. 
Since 
\begin{align}
    \back[\mathbb{X}, \mathbb{A}] 
        =  e^{\beta \suptime{R}{0:t-1}[\suptime{\mathbb{X}}{0:t-1}, \suptime{\mathbb{A}}{0:t-1}]}e^{\beta \suptime{R}{t:}[\suptime{\mathbb{X}}{t:}, \suptime{\mathbb{A}}{t:}]}
         \forward[\suptime{\mathbb{X}}{1:t}, \suptime{\mathbb{A}}{0:t-1} ] \forward[\suptime{\mathbb{X}}{t+1:}, \suptime{\mathbb{A}}{t:}\mid\suptime{x}{t}], 
\end{align}
we have
\begin{align}
    \frac{\back[\Xpath, \Apath]}{\sum_{\suptime{\Xpath}{t+1:}, \suptime{\Apath}{t:}} \back[\Xpath, \Apath] }&=\frac{\left(e^{\beta \suptime{R}{0:t-1}[\suptime{\mathbb{X}}{0:t-1}, \suptime{\mathbb{A}}{0:t-1}]}\forward[\suptime{\mathbb{X}}{1:t}, \suptime{\mathbb{A}}{0:t-1} ]\right)\left(e^{\beta \suptime{R}{t:}[\suptime{\mathbb{X}}{t:}, \suptime{\mathbb{A}}{t:}]}
          \forward[\suptime{\mathbb{X}}{t+1:}, \suptime{\mathbb{A}}{t:}\mid\suptime{x}{t}]\right)}{\sum_{\suptime{\Xpath}{t+1:}, \suptime{\Apath}{t:}}\left(e^{\beta \suptime{R}{0:t-1}[\suptime{\mathbb{X}}{0:t-1}, \suptime{\mathbb{A}}{0:t-1}]}\forward[\suptime{\mathbb{X}}{1:t}, \suptime{\mathbb{A}}{0:t-1}  ]\right)\left(e^{\beta \suptime{R}{t:}[\suptime{\mathbb{X}}{t:}, \suptime{\mathbb{A}}{t:}]]}
          \forward[\suptime{\mathbb{X}}{t+1:}, \suptime{\mathbb{A}}{t:}\mid\suptime{x}{t}]\right)}\\
          =&\frac{e^{\beta \suptime{R}{t:}[\suptime{\mathbb{X}}{t:}, \suptime{\mathbb{A}}{t:}]}
          \forward[\suptime{\mathbb{X}}{t+1:}, \suptime{\mathbb{A}}{t:}\mid\suptime{x}{t}]}{\sum_{\suptime{\Xpath}{t+1:}, \suptime{\Apath}{t:}}e^{\beta \suptime{R}{t:}[\suptime{\mathbb{X}}{t:}, \suptime{\mathbb{A}}{t:}]}
          \forward[\suptime{\mathbb{X}}{t+1:}, \suptime{\mathbb{A}}{t:}\mid\suptime{x}{t}]}= \frac{\exp\left(\beta \suptime{R}{t:}[\suptime{\Xpath}{t:},\suptime{\Apath}{t:}] \right)\forward^{\pi}[\suptime{\Xpath}{t+1:},\suptime{\Apath}{t:} \mid \suptime{x}{t}]}{\exp [\beta \popV{t}(x)]},
\end{align}
where the last equality follows from the definition of $\popV{t}(x)$.

\subsubsection{Proof of~\eqref{eq:variational-representation}}
Let $\mathrm{P}[\Xpath, \Apath]$ be any probability distribution.
By using Jensen's inequality, we have
\begin{align}
    \popV{t}(x) 
    &= \frac{1}{\beta} \log \E_{\forward^{\pi}[\cdot|\suptime{x}{t}]} \left[ \exp\left(\beta \suptime{R}{t:}[\suptime{\Xpath}{t:},\suptime{\Apath}{t:}] \right) \right]\\
    &= \frac{1}{\beta} \log \sum_{\suptime{\Xpath}{t+1:}, \suptime{\Apath}{t:}} \forward^{\pi}[\suptime{\Xpath}{t+1:}, \suptime{\Apath}{t:}|\suptime{x}{t}]  \exp\left(\beta \suptime{R}{t:}[\suptime{\Xpath}{t:},\suptime{\Apath}{t:}] \right)\\
    &=  \frac{1}{\beta} \log \sum_{\suptime{\Xpath}{t+1:}, \suptime{\Apath}{t:}} \mathrm{P}[\suptime{\Xpath}{t+1:}, \suptime{\Apath}{t:}|\suptime{x}{t}] \frac{\forward^{\pi}[\suptime{\Xpath}{t+1:}, \suptime{\Apath}{t:}|\suptime{x}{t}]}{\mathrm{P}[\suptime{\Xpath}{t+1:}, \suptime{\Apath}{t:}|\suptime{x}{t}]}  \exp\left(\beta \suptime{R}{t:}[\suptime{\Xpath}{t:},\suptime{\Apath}{t:}] \right)\\
    &\ge \frac{1}{\beta} \sum_{\suptime{\Xpath}{t+1:}, \suptime{\Apath}{t:}} \mathrm{P}[\suptime{\Xpath}{t+1:}, \suptime{\Apath}{t:}|\suptime{x}{t}]  \left[ \log \frac{\forward^{\pi}[\suptime{\Xpath}{t+1:}, \suptime{\Apath}{t:}|\suptime{x}{t}]}{\mathrm{P}[\suptime{\Xpath}{t+1:}, \suptime{\Apath}{t:}|\suptime{x}{t}]}  +\beta \suptime{R}{t:}[\suptime{\Xpath}{t:},\suptime{\Apath}{t:}]    \right]\\
    &= \E_{\mathrm{P}[\suptime{\Xpath}{t+1:}, \suptime{\Apath}{t:}|\suptime{x}{t}]}\left[\suptime{R}{t:}[\suptime{\Xpath}{t:},\suptime{\Apath}{t:}]   \right] - \frac{1}{\beta} \KL{\mathrm{P}[\suptime{\Xpath}{t+1:},\suptime{\Apath}{t:} \mid \suptime{x}{t}]}{\forward^{\pi}[\suptime{\Xpath}{t+1:},\suptime{\Apath}{t:} \mid \suptime{x}{t}]}. 
\end{align}
By substituting $\mathrm{P}[\suptime{\Xpath}{t+1:},\suptime{\Apath}{t:} \mid \suptime{x}{t}]$ with backward probability 
\begin{align}
\back[\suptime{\Xpath}{t+1:},\suptime{\Apath}{t:} \mid \suptime{x}{t}]= \frac{\exp\left(\beta \suptime{R}{t:}[\suptime{\Xpath}{t:},\suptime{\Apath}{t:}] \right)\forward^{\pi}[\suptime{\Xpath}{t+1:},\suptime{\Apath}{t:} \mid \suptime{x}{t}]}{\exp [\beta \popV{t}(x)]},
\end{align}
we can see that the equality is achived in the above inequality by direct calculation.
Therefore, we have~\eqref{eq:variational-representation} and the maximizer is $\back[\suptime{\Xpath}{t+1:},\suptime{\Apath}{t:} \mid \suptime{x}{t}]$.

\subsubsection{Proof of Theorem~\ref{thm:bellman-back}}
For the first factor in~\eqref{eq:pop-v-by-back-prob}, we have
\begin{align}
    \E_{\back[\cdot\mid \suptime{x}{t}]}\left[\suptime{R}{t:}[\suptime{\Xpath}{t:}, \suptime{\Apath}{t:}]\right] 
    &= \E_{\back[\cdot\mid \suptime{x}{t}]}\left[\gamma^{t}r(\suptime{x}{t}, \suptime{a}{t}) +  \suptime{R}{t+1:}[\suptime{\Xpath}{t+1:}, \suptime{\Apath}{t+1:}]  \right]\\
    &=\E_{\back[\suptime{a}{t}\mid \suptime{x}{t}]}\left[\gamma^{t} r(\suptime{x}{t}, \suptime{a}{t}) + \E_{\back[\cdot\mid \suptime{x}{t+1}=f(\suptime{x}{t}, \suptime{a}{t})]}\left[\suptime{R}{t+1:}[\suptime{\Xpath}{1:},\suptime{\Apath}{t+1:}] \right] \right],
\end{align}
where $\back[\cdot\mid \suptime{x}{t}]=\back[\suptime{\Xpath}{t+1:}, \suptime{\Apath}{t:}\mid \suptime{x}{t}]$ and $\back[\cdot\mid \suptime{x}{t+1}]=\back[\suptime{\Xpath}{t+2:}, \suptime{\Apath}{t+1:}\mid \suptime{x}{t+1}]$. 
We also used 
\begin{align}
\back[\suptime{\Xpath}{t+1:}, \suptime{\Apath}{t:}\mid \suptime{x}{t}]&=\back[\suptime{\Xpath}{t+2:}, \suptime{\Apath}{t+1:}\mid \suptime{x}{t+1}, \suptime{a}{t}, \suptime{x}{t}]\back[\suptime{x}{t+1}, \suptime{a}{t:}\mid \suptime{x}{t}]\\
&=\back[\suptime{\Xpath}{t+2:}, \suptime{\Apath}{t+1:}\mid \suptime{x}{t+1}]\delta_{\suptime{x}{t+1}, f(\suptime{x}{t}, \suptime{a}{t})}\back[\suptime{a}{t:}\mid \suptime{x}{t}]
\end{align}

For the second factor in~\eqref{eq:pop-v-by-back-prob}, we have
\begin{align}
    &\KL{\back[\cdot\mid \suptime{x}{t}]}{\forward[\cdot\mid \suptime{x}{t}]} \\
    &=\E_{\back[\cdot\mid \suptime{x}{t}]}\left[\log \frac{\back[\suptime{\Xpath}{t+2:}, \suptime{\Apath}{t+1:}\mid \suptime{x}{t+1}]\delta_{\suptime{x}{t+1}, f(\suptime{x}{t}, \suptime{a}{t})}\back[\suptime{a}{t:}\mid \suptime{x}{t}]}{\forward[\suptime{\Xpath}{t+2:}, \suptime{\Apath}{t+1:}\mid \suptime{x}{t+1}]\delta_{\suptime{x}{t+1}, f(\suptime{x}{t}, \suptime{a}{t})}\forward[\suptime{a}{t:}\mid \suptime{x}{t}]} \right]\\
    &=\E_{\back[\suptime{a}{t}\mid \suptime{x}{t}]}\left[\log \frac{\back[\suptime{a}{t}\mid \suptime{x}{t}]}{\forward[\suptime{a}{t}\mid \suptime{x}{t}]}+\E_{\back[\cdot\mid \suptime{x}{t+1}=f(\suptime{x}{t}, \suptime{a}{t})]}\left[\log \frac{\back[\suptime{\Xpath}{t+2:}, \suptime{\Apath}{t+1:}\mid \suptime{x}{t+1}=f(\suptime{x}{t}, \suptime{a}{t})]}{\forward[\suptime{\Xpath}{t+2:}, \suptime{\Apath}{t+1:}\mid \suptime{x}{t+1}=f(\suptime{x}{t}, \suptime{a}{t})]}\right] \right]\\
    &= \E_{\back[\suptime{a}{t}\mid \suptime{x}{t}]}\left[\log \frac{\back[\suptime{a}{t}\mid \suptime{x}{t}]}{\pi(\suptime{a}{t}\mid \suptime{x}{t})}\right]+  \E_{\back[\suptime{x}{t}| \suptime{a}{t}]}\left[ \KL{\back[\suptime{\Xpath}{t+2:}, \suptime{\Apath}{t+1:} \mid \suptime{x}{t+1}]]}{\forward[\suptime{\Xpath}{t+2:}, \suptime{\Apath}{t+1:} \mid \suptime{x}{t+1}]} \right]
\end{align}
At the last equality, we used $\forward(\suptime{a}{t}\mid \suptime{x}{t})=\pi(\suptime{a}{t:}\mid \suptime{x}{t})$ and abbreviate the constraint $\suptime{x}{t+1}=f(\suptime{x}{t}, \suptime{a}{t})$ for notational simplicity.
Then,
\begin{align}
\popV{t}(\suptime{x}{t})=&\E_{\back[\suptime{a}{t}\mid \suptime{x}{t}]}\left[\gamma^{t} r(\suptime{x}{t}, \suptime{a}{t})-\frac{1}{\beta}\log \frac{\back[\suptime{a}{t}\mid \suptime{x}{t}]}{\forward[\suptime{a}{t}\mid \suptime{x}{t}]}\right. \\
&+\left. \E_{\back[\cdot\mid \suptime{x}{t+1}]}\left[\suptime{R}{t+1:}[\suptime{\Xpath}{t+1:},\suptime{\Apath}{t+1:}] \right]\right.\\
&\qquad\left.- \frac{1}{\beta}\KL{\back[\suptime{\Xpath}{t+2:}, \suptime{\Apath}{t+1:} \mid \suptime{x}{t+1}]]}{\forward[\suptime{\Xpath}{t+2:}, \suptime{\Apath}{t+1:} \mid \suptime{x}{t+1}]} \right]\\
=&\E_{\back[\suptime{a}{t}\mid \suptime{x}{t}]}\left[\gamma^{t} r(\suptime{x}{t}, \suptime{a}{t})-\frac{1}{\beta}\log \frac{\back[\suptime{a}{t}\mid \suptime{x}{t}]}{\forward[\suptime{a}{t}\mid \suptime{x}{t}]} + \popV{t+1}(f(\suptime{x}{t}, \suptime{a}{t}))\right],
\end{align}
where we used the constraint $\suptime{x}{t+1}=f(\suptime{x}{t}, \suptime{a}{t})$ at the last equality.
Then, Theorem~\ref{thm:bellman-back} was proven.

\subsection{Experiment Setting}
For numerical simulation, we used Intel(R) Core(TM) i7-8650U. 
We did not use any GPUs.
The size of random access memory is 16GB. The operating system is Windows Subsystem for Linux 2 (Linux version 5.15.153.1-microsoft-standard-WSL2).
The version of Python is 3.12.1. 
We install libraries via Poetry.
The setting file is released together with the source code.

\subsection{Extension to General MDP}
\label{sec:general-mdp}
In the main text, we considered the case that the state transition from $\suptime{x}{t}$ to $\suptime{x}{t+1}$ is deterministic given the action $\suptime{a}{t}$, i.e. $\suptime{x}{t+1}=f(\suptime{x}{t}, \suptime{a}{t})$. 
To relax this setup for stochastic transition from $\suptime{x}{t}$ to $\suptime{x}{t+1}$ in general MDP, we have to pay extra care to the source of stochasticity in the state transition, which does not matter for single-agent MDP but it does for multi-agent MDP.

There are two sources of stochasticity in the transition from $\suptime{x}{t}$ to $\suptime{x}{t+1}$. 
One is the stochasitcity coming from environment. 
Even if an agent takes action $a$ at state $x$, the next state can differ between at time $t$ and at time $t'$ because the environmental state is fluctuating over time. 
An example is controlling a glider.  Even if you tilt the control stick of a glider to the right ($a=\mathrm{right})$  in exactly the same way at the same spatial position $x$, the next position of the glider $x'$ would not always be the same at different time points because the direction and strength of the wind may not be the same at those points. 
In this example, the state of wind is the source of stochasticity from the environment.
The stochasticity of the transition may also originate from imperfect actions, e.g., if your action to tile the stick to right is not sufficiently precise, then the next position $x'$ would be different even if the previous state $x$ is exactly the same.
This is the stochasticity from the agent itself.

For the single agent MDP, these two sources need not be distinguished because we care only the stochastic law of the transition in the formulation of the single-agent MDP.
If we consider a population of agents as in POGA or ARL, however, the two situations can produce different outcomes. 
When the source is the environmental stochasticity and two agents who took the same action $a$ at time $t$ at the same state $x$, the next state should be the same. 
When the source is the agent's stochasticity, the next state can be different even if two agents took the same action $a$ at time $t$ at the same state $x$.

The results of the main text can be extended to the case where the source is environmental, where the realization $\Xpath$ of the state is the same for both agents if two agents take the same history of the actions. 
We call this property \textbf{Action-Dependent Determinism of States} (ADDS).
We introduce a lifted MDP, which explicitly formulates the situation where ADDS is satisfied, and the extended ARL algorithm in the next two sections.

\subsection{Lifted MDP with environmental stochasticity}
To extend POGA and ARL to general MDP, 
we introduce a lifted MDP, in which, a deterministic state transition rule from $(x', a)$ to $x$ is stochastically generated at each $t$ and each $n$. 
Specifically, let $\suptime{\mathcal{T}}{t}$ $(t=1,2,\dots)$ be a random map on $X \times A \rightarrow X$ such that
\begin{align}
    \suptime{\mathcal{T}}{t}(x', a) = x,
\end{align}
holds with probability $T(x | x',a)$ in an i.i.d. manner. 
Hereafter, we abbreviate the dependence on $n$ for notational simplicity.

Let us define the pair of initial state and state transition functions as
\begin{align}
    &\suptime{\mathscr{T}}{0} := (\suptime{x}{0}, \suptime{\mathcal{T}}{1}, \suptime{\mathcal{T}}{2},\dots),
\end{align}
where $\suptime{x}{0}$ is sampled from the initial state distribution $v$.
Similarly, we define
\begin{align}
    &\suptime{\mathscr{T}}{t} := (\suptime{x}{t}, \suptime{\mathcal{T}}{t+1}, \suptime{\mathcal{T}}{t+2},\dots).
\end{align}
Thus, once $\suptime{\mathscr{T}}{0}$ is sampled stochastically, the state transitions for all time points are deterministically fixed as in the main text. The following lemmas guarantee that the stochastic law in the original MDP is not altered:
\begin{lemma}
    \label{claim:construct-path-from-metapath}
    Let $\suptime{\mathcal{A}}{t}$ $(t=0,1,\dots)$ be a random variable on $X \rightarrow A$ such that $\suptime{\mathcal{A}}{t}(x) = a$ holds with probability $\pi(a | x)$ and define $\suptime{\mathscr{A}}{0} := (\suptime{\mathcal{A}}{0}, \suptime{\mathcal{A}}{1}, \dots)$.
    The, tuple $(\suptime{\mathscr{T}}{0}, \suptime{\mathscr{A}}{0})$ characterizes the realization of $\Xpath$ and $\Apath$.
\end{lemma}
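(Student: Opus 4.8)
The plan is to separate the statement into two parts: first that the tuple $(\suptime{\mathscr{T}}{0}, \suptime{\mathscr{A}}{0})$ deterministically reconstructs a unique path, and second, the substantive content, that the law induced on $(\Xpath, \Apath)$ by this construction coincides with the original general-MDP forward law. The first part is essentially a definition. Given a realization of $\suptime{\mathscr{T}}{0} = (\suptime{x}{0}, \suptime{\mathcal{T}}{1}, \suptime{\mathcal{T}}{2}, \dots)$ and $\suptime{\mathscr{A}}{0} = (\suptime{\mathcal{A}}{0}, \suptime{\mathcal{A}}{1}, \dots)$, I would define the path by the recursion
\begin{align}
    \suptime{a}{t} := \suptime{\mathcal{A}}{t}(\suptime{x}{t}), \qquad \suptime{x}{t+1} := \suptime{\mathcal{T}}{t+1}(\suptime{x}{t}, \suptime{a}{t}),
\end{align}
initialized at the given $\suptime{x}{0}$. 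Since each step is a function evaluation, this yields $(\Xpath, \Apath)$ as a deterministic function of the tuple, which is exactly the sense in which $(\suptime{\mathscr{T}}{0}, \suptime{\mathscr{A}}{0})$ characterizes the realization.

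\textbf{Matching the law.} The real work is to check that the push-forward of the product law of the maps is the general forward law $v(\suptime{x}{0}) \prod_t \pi(\suptime{a}{t} \mid \suptime{x}{t})\, T(\suptime{x}{t+1} \mid \suptime{x}{t}, \suptime{a}{t})$. I would prove this by induction on the time horizon, showing that every finite-dimensional marginal of the constructed process agrees with the MDP. For the base case, $\suptime{x}{0} \sim v$ by construction, and since $\suptime{\mathcal{A}}{0}$ is drawn independently of $\suptime{x}{0}$ with $\Prob[\suptime{\mathcal{A}}{0}(x) = a] = \pi(a \mid x)$, conditioning on the value of $\suptime{x}{0}$ gives $\suptime{a}{0} \sim \pi(\cdot \mid \suptime{x}{0})$. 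For the inductive step, assuming the pairs up to time $t$ already follow the MDP law, I would condition on the history $(\suptime{\Xpath}{0:t}, \suptime{\Apath}{0:t})$ and invoke that $\suptime{\mathcal{T}}{t+1}$ is independent of all earlier maps; conditioning further on the value of $(\suptime{x}{t}, \suptime{a}{t})$ then yields $\suptime{x}{t+1} = \suptime{\mathcal{T}}{t+1}(\suptime{x}{t}, \suptime{a}{t}) \sim T(\cdot \mid \suptime{x}{t}, \suptime{a}{t})$. The same independence applied to $\suptime{\mathcal{A}}{t+1}$ gives $\suptime{a}{t+1} \sim \pi(\cdot \mid \suptime{x}{t+1})$. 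Multiplying the conditional factors telescopes into the asserted product form, establishing that the construction does not alter the stochastic law.

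\textbf{Main obstacle.} The delicate point is that the random map $\suptime{\mathcal{T}}{t+1}$ is evaluated at the random argument $(\suptime{x}{t}, \suptime{a}{t})$, which itself depends on the earlier maps; a priori one might fear that this dependence distorts the transition kernel. The resolution is the i.i.d. / mutual independence assumption: because $\suptime{\mathcal{T}}{t+1}$ is sampled independently of everything that determines $(\suptime{x}{t}, \suptime{a}{t})$, one may first condition on the realized value of that argument and only then apply the marginal law of $\suptime{\mathcal{T}}{t+1}$ at a fixed point. Formally this is the tower property together with a disintegration of the product measure, and I would state this conditioning lemma once at the outset and reuse it verbatim at each step of the induction, which keeps the argument from repeating the same measure-theoretic bookkeeping.
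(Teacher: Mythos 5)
Your proposal is correct, and its core—the recursive construction $\suptime{a}{t} := \suptime{\mathcal{A}}{t}(\suptime{x}{t})$, $\suptime{x}{t+1} := \suptime{\mathcal{T}}{t+1}(\suptime{x}{t}, \suptime{a}{t})$ starting from the $\suptime{x}{0}$ contained in $\suptime{\mathscr{T}}{0}$—is exactly the paper's proof of this lemma. The law-matching argument you add goes beyond what this lemma asserts; the paper defers that claim to the immediately following lemma and proves it there by the same conditioning-on-the-history argument you sketch.
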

\begin{proof}
    The initial state $\suptime{x}{0}$ is given by $\suptime{\mathscr{T}}{0}$.
    For other states and actions, we can recursively construct $\suptime{a}{t}$ and $\suptime{x}{t+1}$ from $\suptime{x}{t}$ by the following rule:
    \begin{align}
        &\suptime{a}{t} := \suptime{\mathcal{A}}{t}(\suptime{x}{t}),\\
        &\suptime{x}{t+1} := \suptime{\mathcal{{T}}}{t}(\suptime{x}{t}, \suptime{a}{t}).
    \end{align}
\end{proof}

\begin{lemma}
    The paths $\Xpath$ and $\Apath$ constructed by Lemma~\ref{claim:construct-path-from-metapath} follow the same distribution as the original MDP.
\end{lemma}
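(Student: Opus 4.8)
The plan is to prove the identity of path laws by induction on the time index, showing that for every finite horizon $t$ the joint law of the prefix $(\suptime{x}{0}, \suptime{a}{0}, \dots, \suptime{a}{t-1}, \suptime{x}{t})$ generated by the recursion of Lemma~\ref{claim:construct-path-from-metapath} coincides with the original MDP prefix law
\[
v(\suptime{x}{0}) \prod_{s=0}^{t-1} \pi(\suptime{a}{s}\mid \suptime{x}{s})\, T(\suptime{x}{s+1}\mid \suptime{x}{s}, \suptime{a}{s}).
\]
Since the two families of finite-dimensional distributions then agree for all $t$, the Kolmogorov extension theorem yields equality of the full laws of $\Xpath$ and $\Apath$, which is the assertion.

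The central step, and the only one I expect to require care, is to control the evaluation of a freshly sampled random map at a state that is itself random and correlated with the earlier maps. First I would introduce the filtration $\mathcal{G}_t$ generated by $\suptime{x}{0}$ together with all maps consumed up to time $t$, i.e. $\suptime{\mathcal{A}}{0},\dots,\suptime{\mathcal{A}}{t-1}$ and $\suptime{\mathcal{T}}{1},\dots,\suptime{\mathcal{T}}{t}$. By the recursion, the entire prefix $\suptime{x}{0},\dots,\suptime{x}{t}$ and $\suptime{a}{0},\dots,\suptime{a}{t-1}$ is a deterministic, hence $\mathcal{G}_t$-measurable, function of these ingredients. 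Because the maps are drawn in an i.i.d. manner, the next action map $\suptime{\mathcal{A}}{t}$ is independent of $\mathcal{G}_t$; hence, conditioning on $\mathcal{G}_t$ (which fixes the realized value $\suptime{x}{t}=x$), the single coordinate $\suptime{a}{t}=\suptime{\mathcal{A}}{t}(x)$ has marginal $\pi(\cdot\mid x)$ by the defining law of $\suptime{\mathcal{A}}{t}$, unaffected by the conditioning. The subtlety to flag explicitly is that $\suptime{\mathcal{A}}{t}$ assigns actions at every state, yet only its value at the realized $\suptime{x}{t}$ enters the path, and that coordinate carries exactly the desired conditional law precisely because of the independence from the past.

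An identical argument then handles the transition: conditioning additionally on $\suptime{a}{t}$, the fresh and mutually independent map $\suptime{\mathcal{T}}{t}$ is independent of the pair $(\mathcal{G}_t, \suptime{a}{t})$, so $\suptime{x}{t+1}=\suptime{\mathcal{T}}{t}(\suptime{x}{t},\suptime{a}{t})$ has conditional law $T(\cdot\mid \suptime{x}{t},\suptime{a}{t})$. Chaining these two per-step conditional laws through the tower property advances the induction by one step and reproduces the claimed product. I expect the main obstacle to be stating the independence-of-fresh-maps step rigorously, in particular keeping track that $\suptime{\mathcal{A}}{t}$ and $\suptime{\mathcal{T}}{t}$ are mutually independent so that conditioning on the realized action does not bias the transition map; the remaining bookkeeping with the tower property is routine.
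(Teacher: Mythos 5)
Your proof is correct and follows essentially the same route as the paper's: both reduce the claim to verifying that the per-step conditional laws of the constructed path are $v$ for the initial state, $\pi(\cdot \mid x)$ for actions, and $T(\cdot \mid x, a)$ for transitions. You merely make explicit, via the filtration of consumed maps and the independence of each freshly used i.i.d.\ map from that past, the Markov-property step that the paper asserts directly "from the construction."
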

\begin{proof}
    From the construction in Lemma~\ref{claim:construct-path-from-metapath}, we know that $\suptime{x}{t;1}$ and $\suptime{a}{t}$ satisfies the same Markov property as the original MDP.
    Concretely, we know that
    \begin{align}
        &\mathrm{P}(\suptime{a}{t} \mid \suptime{\Xpath}{:t}, \suptime{\Apath}{:t-1}, \suptime{\mathscr{T}}{0}, \suptime{\mathscr{\Apath}}{0}) = \mathrm{P}(\suptime{a}{t} \mid \suptime{x}{t}, \suptime{\mathscr{T}}{0}, \suptime{\mathscr{A}}{0}),\\
        &\mathrm{P}(\suptime{x}{t+1} \mid \suptime{\Xpath}{:t}, \suptime{A}{:t-1}, \suptime{\mathscr{T}}{0}, \suptime{\mathscr{A}}{0}) = \mathrm{P}(\suptime{x}{t+1} \mid \suptime{x}{t},\suptime{\mathscr{T}}{0}, \suptime{\mathscr{A}}{0}).\\
    \end{align}
    Therefore, we can prove this claim by checking that the conditional distributions above are the same as the original MDP.
    For the initial state, we know that $\suptime{x}{0}$ follows the initial distribution $v$ from the definition of $\suptime{\mathscr{T}}{0}$.
    For $\suptime{a}{t}$, we know that $\suptime{a}{t}$ follows $\pi(\cdot \mid \suptime{x}{t})$ by the construction of $\suptime{\mathcal{A}}{t}$.
    For $\suptime{x}{t+1}$, we also know that $\suptime{x}{t}$ follows $T(\cdot \mid \suptime{x}{t}, \suptime{a}{t})$ by the same argument. 
\end{proof}

\subsection{Extended POGA and ARL for lifted MDP}
\begin{algorithm}[tb]
\caption{Unbiased Population Optimization via GA (POGA)}
\label{alg:unbiased_ga}
\begin{algorithmic}[1]
\State{Sample initial policies $\{\pi_{\suptime{\theta_i}{0}}\}_{i=1,2,\dots,\popSize-1}$.}
\For{$n=0, 1,\dots$}
    \State{Sample $\suptime{\mathscr{T}}{0}$.}
        \For{$i = 0,1,\dots, \popSize -1$}
            \State{Mutate policy $\pi_{\suptime{\theta_i}{n}}$ by adding small noise to the parameter and obtain $\pi_{\suptime{\theta_i'}{n}}$.}
            \State{Run simulation with $\suptime{\mathscr{T}}{0}$ for the mutated policy $\pi_{\suptime{\theta_i'}{n}}$ and observe the discounted cumulative reward $R_i$.}
            \State{Calculate fitness by $\suptime{f}{n}_i \gets \exp(\beta R_i)$.}
        \EndFor
        \State{Select each agent $\pi_{\suptime{\theta_{i}}{n+1}}$ in the next population independently from the mutated population $\{\pi_{\suptime{\theta_i'}{n}}\}_{i =0,1,\dots, \popSize-1}$ with probability proportional to $\suptime{f_i}{n}$.}
\EndFor
\end{algorithmic}
\end{algorithm}
\begin{algorithm}[tb]
\caption{Ancestral Reinforcement Learning}
\label{alg:unbised_arl}
\begin{algorithmic}[1]
\State{Sample initial policies $\{\suptime{\pi_{\theta_i}}{0}\}_{i =1,2,\dots, \popSize-1}$.}
\For{$n =0,1,\dots$}
    \State{Sample $\suptime{\mathscr{T}}{0}$.}
        \For{$i = 0,1,\dots, \popSize -1$}            
            \State{ $\suptime{\theta_i'}{n} \gets \mathrm{AncestralLearning}(\suptime{\theta_i}{n}, \Xpath, \Apath)$, where $\Xpath$ and $\Apath$ are the parent's history of states and action (Skip this step at $n=0$).}
            \State{Run simulation with  $\suptime{\mathscr{T}}{0}$ until time $T$ with policy $\pi_{\suptime{\theta_i'}{n}}$ and observe discounted cumulative reward $R_i$, the history $\Xpath = \{\suptime{x}{0}, \suptime{x}{1}, \dots \}$ of the states, and that of $\Apath = \{\suptime{a}{0}, \suptime{a}{1}, \dots \}$ of actions.}
        \EndFor
        \State{Calculate fitness as $\suptime{f}{n}_i \gets \exp(\beta R_i)$.}
        \State{Select each agent $\pi_{\suptime{\theta_{i}}{n+1}}$ for the next population independently from the mutated population $\{\{\pi_{\suptime{\theta_i'}{n}}\}_{i =0,1,\dots,\popSize-1}$ with probability proportional to $\suptime{f_i}{n}$.}
\EndFor
\end{algorithmic}
\end{algorithm}

Using the lifted MDP, we can construct extended POGA (Algorithm~\ref{alg:unbiased_ga}) and ARL (Algorithm~\ref{alg:unbised_arl}).
The key idea is that we sample $\suptime{\mathscr{T}}{0}$ at each iteration of the algorithm.
This $\suptime{\mathscr{T}}{0}$ is common for all agents and is used to determine the next state from the agent's action and previous state\footnote{Note that the sampling of action is conducted independently among population as in the main text.}.

It should be noted that the way we lift MDP does not limit applicability of ARL because what we have to do is to sample $\suptime{\mathscr{T}}{0}$ so that ASSD condition is guaranteed.
When we apply extended  POGA and ARL to numerical simulations, the sampling is realized  using the common sequence of pseudo-random numbers for all simulations at each iteration.
Moreover, ASSD condition can matter only when two agents can take exactly the same state and the same action at the same time with a sufficient likelihood.
This can happen in MDP with a discrete state space but could be practically negligible when the size of discrete state space is sufficiently large compared with the number of population to be used.
In addition, such an event is negligible in MDP with a continuous state space.
Thus, the lifted MPD is general for the practical purpose.

When we need to run extended  POGA and ARL in physical world using a population of physical agents, we cannot use common $\suptime{\mathscr{T}}{0}$.
However, the bias may not be problematic as physical MDPs usually have continuous state space and the number of available physical agents is severely restricted.

\subsection{Generalization of the result in the main text}
\subsubsection{Generalization of Lemma~\ref{thm:population-fitness}}
Since the forward probability is dependent on $\suptime{\mathscr{T}}{0}$, we denote $\forward[\Xpath, \Apath]$ as $\forward[\Xpath, \Apath \mid \suptime{\mathscr{T}}{0}]$.
However, when the dependency is clear from the context, we omit the conditioning.

To derive the population fitness for general MDP, we first generalize~\eqref{eq:def-pd} and~\eqref{eq:def-population-dynamics-approx-detail}.
Since $\suptime{p}{n}$ also depends on $\suptime{\mathscr{T}}{0}$, we denote $\suptime{p}{n}$ by $\suptime{p}{n}_{\suptime{\mathscr{T}}{0}}$.
\begin{lemma}
    Equation~\eqref{eq:def-population-dynamics-approx-detail} holds. Concretely, when conditioned by $\suptime{\mathscr{T}}{0}$, we have
    \begin{align}
        \label{eq:population-fitness-gradient-general-mdp}
            \suptime{p}{n+1}_{\suptime{\mathscr{T}}{0}}(\pi, \Xpath, \Apath) =   \frac{\exp(\beta R[\Xpath, \Apath])}{\E_{\suptime{p}{n}_{\suptime{\mathscr{T}}{0}}(\pi')}\left[\E_{\forward^{\pi'}[\Xpath, \Apath \mid \suptime{\mathscr{T}}{0}]}\left[\exp(\beta R[\Xpath, \Apath])\right] \right]} \suptime{p}{n}_{\suptime{\mathscr{T}}{0}}(\pi)\forward^\pi[\Xpath, \Apath \mid \suptime{\mathscr{T}}{0}] .
    \end{align}
\end{lemma}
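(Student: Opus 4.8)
The plan is to reduce the general-MDP claim to the deterministic computation already carried out in the Appendix derivation of the population dynamics, exploiting the fact that once $\suptime{\mathscr{T}}{0}$ is fixed the lifted MDP is deterministic. After conditioning on $\suptime{\mathscr{T}}{0}$ the random transition maps are frozen, so the state history is pinned down by the action history through the recursion $\suptime{x}{t+1} = \suptime{\mathcal{T}}{t}(\suptime{x}{t}, \suptime{a}{t})$ of Lemma~\ref{claim:construct-path-from-metapath}, exactly as $f$ pinned it down in the main text. The conditional forward probability then takes the product form $\forward^{\pi}[\Xpath, \Apath \mid \suptime{\mathscr{T}}{0}] = \prod_t \pi(\suptime{a}{t} \mid \suptime{x}{t}) \delta_{\suptime{x}{t+1}, \suptime{\mathcal{T}}{t}(\suptime{x}{t}, \suptime{a}{t})}$, structurally identical to~\eqref{eq:def-forward} with $f$ replaced by the frozen maps; since each agent samples its actions independently of $\suptime{\mathscr{T}}{0}$, this is precisely the law from which each agent's path is drawn.

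First I would record, from the selection rule of Algorithm~\ref{alg:unbised_arl}, the exact conditional expected offspring fraction: given the history $\mathcal{F}_n$ and the frozen $\suptime{\mathscr{T}}{0}$, the fraction of next-generation agents whose parent carried policy $\pi$ and produced history $(\Xpath, \Apath)$ equals $\exp(\beta R[\Xpath, \Apath]) / \E_{\suptime{j}{n}}\left[\exp(\beta R[\Xpath, \Apath])\right]$ times the empirical parent distribution $\suptime{j}{n}(\pi, \Xpath, \Apath)$. This is identical bookkeeping to the deterministic derivation, so no new ingredient enters at this step.

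Next I would take the limit $\popSize \to \infty$ through the same two approximations as before, now applied conditionally on $\suptime{\mathscr{T}}{0}$: the conditional fraction concentrates to a deterministic limit independent of $\mathcal{F}_n$, and the empirical distribution converges by the law of large numbers to $\suptime{j}{n}(\pi, \Xpath, \Apath) \approx \suptime{p}{n}_{\suptime{\mathscr{T}}{0}}(\pi)\, \forward^{\pi}[\Xpath, \Apath \mid \suptime{\mathscr{T}}{0}]$. The factorization is justified because, conditioned on $\suptime{\mathscr{T}}{0}$, the agents' paths are i.i.d.\ draws obeying the Markov law established in the two preceding lemmas. Substituting this into the offspring-fraction expression and normalizing yields exactly~\eqref{eq:population-fitness-gradient-general-mdp}.

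The main obstacle I anticipate is conceptual rather than algebraic: verifying that the independence of the action sampling across agents survives conditioning on the shared $\suptime{\mathscr{T}}{0}$, so that the conditional law of large numbers still applies and the empirical measure factorizes as $\suptime{p}{n}_{\suptime{\mathscr{T}}{0}}(\pi)$ times the conditional forward probability. Once this independence-given-$\suptime{\mathscr{T}}{0}$ is secured, the remainder is a verbatim repetition of the deterministic argument with $f$ replaced by $\suptime{\mathcal{T}}{t}$, and the stated identity follows.
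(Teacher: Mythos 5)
Your proposal is correct and follows essentially the same route as the paper: the paper's proof likewise reduces the claim to the deterministic derivation of \eqref{eq:def-population-dynamics-approx-detail} by conditioning $\suptime{p}{n}$, $\suptime{p}{n+1}$, $\forward$, and $\suptime{j}{n}$ on $\suptime{\mathscr{T}}{0}$ and repeating the argument with the frozen transition maps in place of $f$. Your additional remark that the agents' action sampling remains independent given the shared $\suptime{\mathscr{T}}{0}$ (so the conditional law of large numbers and the factorization of the empirical measure go through) is exactly the point the paper leaves implicit.
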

\begin{proof}
    Almost the same proof for~\eqref{eq:def-population-dynamics-approx-detail} holds. We only mention the differences.
    Since $\suptime{j}{n}$ depends on $\suptime{\mathscr{T}}{0}$, we denote it by $\suptime{j}{n}_{\suptime{\mathscr{T}}{0}}$.
    Then, conditioning $\suptime{p}{n}, \suptime{p}{n+1}, \forward$, and $\suptime{j}{n}$ by $\suptime{\mathscr{T}}{0}$ in the proof of~\eqref{eq:def-population-dynamics-approx-detail}, we have the modified proof.
\end{proof}
In particular, we have an equation corresponding to ~\eqref{eq:def-pd}:
\begin{align}
    \suptime{p}{n+1}_{\suptime{\mathscr{T}}{0}}(\pi) =   \frac{\E_{ \forward^{\pi}[\Xpath, \Apath \mid \suptime{\mathscr{T}}{0}]}\left[\exp(\beta R[\Xpath, \Apath])\right]}{\E_{\suptime{p}{n}_{\suptime{\mathscr{T}}{0}}(\pi')}\left[\E_{ \forward^{\pi'}[\Xpath, \Apath \mid \suptime{\mathscr{T}}{0}]]}\left[\exp(\beta R[\Xpath, \Apath])\right] \right]} \suptime{p}{n}_{\suptime{\mathscr{T}}{0}}(\pi). 
\end{align}
From this equation, we defined a conditioned population fitness as follows:
\begin{align}
    \lambda(\pi \mid \suptime{\mathscr{T}}{0}) := \frac{1}{\beta} \log\E_{\forward^{\pi}[\Xpath, \Apath \mid \suptime{\mathscr{T}}{0}]}\left[ \exp({\beta R[\Xpath, \Apath])}\right],
\end{align}
We also define an averaged population fitness by
\begin{align}
    \label{eq:pd-general}
    \bar \lambda(\pi) := \E_{\suptime{\mathscr{T}}{0}}\left[ \lambda(\pi \mid \suptime{\mathscr{T}}{0}) \right],
\end{align}
where the expectation is taken for the distribution of $\suptime{\mathscr{T}}{0}$.

Let us prove Lemma~\ref{thm:population-fitness} for general MDP.
In the algorithm, $\suptime{\mathscr{T}}{0}$ is sampled independently at each iteration.
Let  $\suptime{\mathscr{T}}{0}_n$ be the value sampled at $n$-th iteration.
From~\eqref{eq:pd-general}, until the $N$-th iteration, the policy $\pi$ is amplified by the populational growth by the factor
\begin{align}
    \prod_{n=0}^{N-1} e^{\beta \lambda(\pi \mid \suptime{\mathscr{T}}{0}_n)}.
\end{align}
Therefore, if the iteration of the algorithm is sufficiently large, the average amplification by population growth at each time becomes 
\begin{align}
    \lim_{N \to \infty }\left(\prod_{n=0}^{N-1} e^{\beta \lambda(\pi \mid \suptime{\mathscr{T}}{0}_n)}\right)^{\frac{1}{N}} \approx e^{\beta \bar \lambda(\pi)}
\end{align}
due to the law of large number.
This equation implies that Lemma~\ref{thm:population-fitness} holds for general MDP.
In particular, we know that the objective function of POGA and ARL is $\bar \lambda(\pi)$ in general MDP.

\subsubsection{Generalization of Theorem~\ref{thm:ancestral-gradient}}
We first prove Theorem~\ref{alg:ancestral_learning} when conditioned by $\suptime{\mathscr{T}}{0}$.
After that, we prove the complete result by taking the average on $\suptime{\mathscr{T}}{0}$.

From~\eqref{eq:def-population-dynamics-approx-detail}, we define the backward probability by
\begin{align}
    \back^{\pi}[\Xpath, \Apath \mid \suptime{\mathscr{T}}{0}] :=\E\left[ j_{\pi}[\Xpath, \Apath] \right] \propto e^{\beta R[\Xpath, \Apath]} \forward^{\pi}[\Xpath, \Apath \mid \suptime{\mathscr{T}}{0}].
\end{align}
\begin{lemma}
    \begin{align}
    \label{eq:ancestral-grad-general-mdp}
        \nabla_\theta \lambda(\pi_\theta \mid \suptime{\mathscr{T}}{0}) =\frac{1}{\beta} \E_{\back^{\pi_{\theta}}[\Xpath, \Apath \mid \suptime{\mathscr{T}}{0}]} \left[ \sum_{t=0}^{T} \nabla_\theta \log \pi_{\theta}(\suptime{a}{t}, \suptime{x}{t})  \right].
    \end{align}
\end{lemma}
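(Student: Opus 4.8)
The plan is to mirror the proof of Proposition~\ref{thm:theta-gradient-pop} essentially verbatim, carrying the conditioning on $\suptime{\mathscr{T}}{0}$ through every step. The crucial observation is that once $\suptime{\mathscr{T}}{0} = (\suptime{x}{0}, \suptime{\mathcal{T}}{1}, \suptime{\mathcal{T}}{2}, \dots)$ has been sampled, the lifted MDP collapses to a deterministic-transition MDP of exactly the kind treated in the main text, with the fixed map $f$ replaced by the realized random maps $\suptime{\mathcal{T}}{t+1}$. Concretely, the conditioned forward probability factorizes in direct analogy with~\eqref{eq:def-forward} as
\begin{align}
    \forward^{\pi_\theta}[\Xpath, \Apath \mid \suptime{\mathscr{T}}{0}] = \prod_t \pi_\theta(\suptime{a}{t}\mid\suptime{x}{t})\, \delta_{\suptime{x}{t+1}, \suptime{\mathcal{T}}{t+1}(\suptime{x}{t}, \suptime{a}{t})},
\end{align}
so that the entire dependence on $\theta$ sits in the policy factors, while the transition factors are $\theta$-independent Kronecker deltas pinned down by $\suptime{\mathscr{T}}{0}$.

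First I would start from the definition $\exp[\beta\lambda(\pi_\theta\mid\suptime{\mathscr{T}}{0})] = \sum_{\Xpath,\Apath} e^{\beta R[\Xpath,\Apath]}\forward^{\pi_\theta}[\Xpath,\Apath\mid\suptime{\mathscr{T}}{0}]$ and differentiate both sides with respect to $\theta$. Because $R[\Xpath,\Apath]$ is independent of $\theta$, the gradient falls entirely on the forward probability. Applying the product rule to the factorization above and invoking the log-derivative trick $\nabla_\theta\pi_\theta = \pi_\theta\,\nabla_\theta\log\pi_\theta$ gives
\begin{align}
    \nabla_\theta\forward^{\pi_\theta}[\Xpath,\Apath\mid\suptime{\mathscr{T}}{0}] = \left(\sum_{t=0}^{T}\nabla_\theta\log\pi_\theta(\suptime{a}{t}\mid\suptime{x}{t})\right)\forward^{\pi_\theta}[\Xpath,\Apath\mid\suptime{\mathscr{T}}{0}].
\end{align}
Substituting back, I would recognize from the conditioned backward probability that $e^{\beta R}\forward^{\pi_\theta}[\cdot\mid\suptime{\mathscr{T}}{0}] = e^{\beta\lambda(\pi_\theta\mid\suptime{\mathscr{T}}{0})}\,\back^{\pi_\theta}[\cdot\mid\suptime{\mathscr{T}}{0}]$, whose normalizing constant is precisely $e^{\beta\lambda}$. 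The sum over paths then becomes an expectation under $\back^{\pi_\theta}[\cdot\mid\suptime{\mathscr{T}}{0}]$, and equating with the left-hand side $\nabla_\theta\exp[\beta\lambda] = \beta e^{\beta\lambda}\nabla_\theta\lambda$ and dividing through by $\beta e^{\beta\lambda}$ yields the claimed identity~\eqref{eq:ancestral-grad-general-mdp}.

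The one point requiring genuine care is verifying that the conditioning truly eliminates all $\theta$-dependence from the transitions, so that the only surviving derivatives are those of the policy factors; this is exactly where the lifted-MDP construction earns its keep, since sampling $\suptime{\mathscr{T}}{0}$ once and sharing it realizes the maps $\suptime{\mathcal{T}}{t+1}$ independently of the agent's policy and renders the Kronecker deltas inert under $\nabla_\theta$. Everything else is a routine rerun of the unconditioned argument, and the passage from the conditioned gradient to $\nabla_\theta\bar\lambda(\pi_\theta)$ follows afterward by linearity of the expectation over $\suptime{\mathscr{T}}{0}$, handled as a separate averaging step.
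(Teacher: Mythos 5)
Your proposal is correct and follows essentially the same route as the paper: the paper's own proof is precisely a rerun of the proof of Proposition~\ref{thm:theta-gradient-pop} with $\lambda$, $\forward$, and $\back$ conditioned on $\suptime{\mathscr{T}}{0}$, differentiating $\exp[\beta\lambda(\pi_\theta\mid\suptime{\mathscr{T}}{0})]$, applying the log-derivative trick to the policy factors along paths with $\suptime{x}{t+1}=\suptime{\mathcal{T}}{t}(\suptime{x}{t},\suptime{a}{t})$, and identifying the conditioned backward probability. Your explicit factorization of $\forward^{\pi_\theta}[\Xpath,\Apath\mid\suptime{\mathscr{T}}{0}]$ with Kronecker deltas makes the key point (that the transition factors are $\theta$-independent once $\suptime{\mathscr{T}}{0}$ is fixed) slightly more explicit than the paper does, but it is the same argument.
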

\begin{proof}
Basically, we can prove this lemma in the same way as Theorem~\ref{thm:ancestral-gradient}.
By conditioning $\lambda, \forward, \back$ by $\suptime{\mathscr{T}}{0}$, we can  prove the theorem.

For simplicity, we denote $\suptime{\theta_i}{n}$ by $\theta$.
By definition,
\begin{align}
    &\exp[\beta \lambda(\pi_{\theta} \mid \suptime{\mathscr{T}}{0})] = \sum_{\Xpath, \Apath} e^{\beta R[\Xpath, \Apath]} \forward^{\pi_{\theta}}[\Xpath, \Apath \mid \suptime{\mathscr{T}}{0}].
\end{align}
By differentiating the both hand side by $\theta$, we have
\begin{align}
    \nabla_\theta  \exp[\beta \lambda(\pi_{\theta} \mid \suptime{\mathscr{T}}{0})]  &= \sum_{\Xpath, \Apath} e^{\beta R[\Xpath, \Apath]} \nabla_\theta  \forward^{\pi_{\theta}}[\Xpath, \Apath \mid \suptime{\mathscr{T}}{0}],
\end{align}
since $R[\Xpath, \Apath]$ is independent of $\theta$.
By~\eqref{eq:def-forward}, we have
\begin{align}
    & \nabla_\theta \forward^{\pi_{\theta}}[\Xpath, \Apath \mid \suptime{\mathscr{T}}{0}] \\
    &=\nabla_\theta \left[ \prod_t  \pi_\theta(\suptime{a}{t} \mid \suptime{x}{t}) \right],\\
    &= \sum_{t=0}^{T} \left[ \nabla_\theta   \pi_\theta(\suptime{a}{t} \mid \suptime{x}{t}) \right] \left[\prod_{s \neq t}  \pi_\theta(\suptime{a}{s} \mid \suptime{x}{s}) \right]
\end{align}
on the path where $\suptime{x}{t+1} = \suptime{\mathcal{T}}{t}(\suptime{x}{t}, \suptime{a}{t})$ for $t=0,1,\dots$.
By using the log derivative trick:
\begin{align}
   \nabla_\theta  \log \pi_\theta(a \mid x) = \frac{ \nabla_\theta  \pi_\theta(a \mid x)}{\pi_\theta(a \mid x)},
\end{align}
we have
\begin{align}
    &\nabla_\theta \forward^{\pi_{\theta}}[\Xpath, \Apath \mid \suptime{\mathscr{T}}{0}] \\
    &= \sum_{t=0}^{T} e^{\beta R[\Xpath, \Apath]} \left(\nabla_\theta  \log \pi_\theta(\suptime{a}{t} \mid \suptime{x}{t}) \right)  \forward^{\pi_{\theta}}[\Xpath, \Apath \mid \suptime{\mathscr{T}}{0}]\\
    &=\exp[\beta \lambda(\pi_{\theta} \mid \suptime{\mathscr{T}}{0})] \sum_t  \left(\nabla_\theta  \log \pi_\theta(\suptime{a}{t} \mid \suptime{x}{t}) \right) \back^{\pi_{\theta}}[\Xpath, \Apath \mid \suptime{\mathscr{T}}{0}].
\end{align}
By combining this equation to~\eqref{eq:prove-policy-gradient}, we have
\begin{align}
    \nabla_\theta \exp[\beta \lambda(\pi_{\theta})] = \beta \exp[\beta \lambda(\pi_{\theta}\mid \suptime{\mathscr{T}}{0})] \nabla_\theta \lambda(\pi_{\theta}\mid \suptime{\mathscr{T}}{0})=\exp[\beta \lambda(\pi_{\theta}\mid \suptime{\mathscr{T}}{0})] \E_{\back^{\pi_{\theta}}[\Xpath, \Apath \mid \suptime{\mathscr{T}}{0}]} \left[ \sum_{t=0}^{T} \nabla_\theta \log \pi_{\theta}(\suptime{a}{t}, \suptime{x}{t})  \right].
\end{align}
Thus, 
\begin{align}
    \nabla_\theta \lambda(\pi_\theta\mid \suptime{\mathscr{T}}{0}) =\frac{1}{\beta} \E_{\back^{\pi_{\theta}}[\Xpath, \Apath \mid \suptime{\mathscr{T}}{0}]} \left[ \sum_{t=0}^{T} \nabla_\theta \log \pi_{\theta}(\suptime{a}{t}, \suptime{x}{t})  \right].
\end{align}
\end{proof}

Since the ancestral estimator of the gradient depends on $\suptime{\mathscr{T}}{0}$, we denote it by $\ancestralGrad \lambda(\pi_{\suptime{\theta_i}{n}} \mid \suptime{\mathscr{T}}{0})$.
We denote its average by
\begin{align}
    \ancestralGrad \bar \lambda(\pi_{\suptime{\theta_i}{n}}) := \E_{\suptime{\mathscr{T}}{0}}\left[\ancestralGrad \lambda(\pi_{\suptime{\theta_i}{n}} \mid \suptime{\mathscr{T}}{0}) \right].
\end{align}

\begin{theorem}
\begin{align}
     \E[\ancestralGrad \lambda(\pi_{\suptime{\theta_i}{n}} \mid \suptime{\mathscr{T}}{0})] \propto \nabla_{\suptime{\theta_i}{n}} \lambda(\pi_{\suptime{\theta_i}{n}}\mid \suptime{\mathscr{T}}{0}),
\end{align}
where the expectation is taken over the realization of the parent's history of states and actions.

In particular, by taking average on $\suptime{\mathscr{T}}{0}$, we have
\begin{align}
    \E[\ancestralGrad \bar \lambda(\pi_{\suptime{\theta_i}{n}})] \propto \nabla_{\suptime{\theta_i}{n}} \bar \lambda(\pi_{\suptime{\theta_i}{n}}),
\end{align}
where the expecatation is taken over both $\suptime{\mathscr{T}}{0}$ and the realization of the parent's history of states and actions

\end{theorem}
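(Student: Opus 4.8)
The plan is to mirror, essentially verbatim, the two-ingredient argument that establishes Theorem~\ref{thm:ancestral-gradient} in the deterministic case, but to run it first with the sampled transition structure $\suptime{\mathscr{T}}{0}$ held fixed and only afterwards average over $\suptime{\mathscr{T}}{0}$. The two ingredients, both already available in conditioned form in the appendix, are: (i) the identity $\back^{\pi}[\Xpath, \Apath \mid \suptime{\mathscr{T}}{0}] = \E[j_{\pi}[\Xpath, \Apath]] \propto e^{\beta R[\Xpath, \Apath]} \forward^{\pi}[\Xpath, \Apath \mid \suptime{\mathscr{T}}{0}]$, stating that the conditioned backward probability is exactly the expected empirical parent distribution; and (ii) the conditioned policy-gradient identity~\eqref{eq:ancestral-grad-general-mdp}, $\nabla_\theta \lambda(\pi_\theta \mid \suptime{\mathscr{T}}{0}) = \frac{1}{\beta}\E_{\back^{\pi_\theta}[\cdot \mid \suptime{\mathscr{T}}{0}]}\left[\sum_t \nabla_\theta \log \pi_\theta(\suptime{a}{t}\mid\suptime{x}{t})\right]$.

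For the \emph{conditioned} statement I would first observe that, once $\suptime{\mathscr{T}}{0}$ is frozen, the lifted MDP is deterministic and reproduces the main-text setting exactly. Taking the expectation of $\ancestralGrad \lambda(\pi_{\suptime{\theta_i}{n}} \mid \suptime{\mathscr{T}}{0})$ over the realized parent history (with $\suptime{\mathscr{T}}{0}$ fixed), and recalling from the definition~\eqref{eq:ancestral_grad} that $\ancestralGrad \lambda$ is precisely $\E_{j_i}\left[\sum_t \nabla_\theta \log \pi_\theta\right]$ evaluated on the realized path, ingredient (i) lets me replace $\E[j_i]$ by $\back^{\pi}[\cdot\mid\suptime{\mathscr{T}}{0}]$ and rewrite the expected estimator as $\E_{\back^{\pi}[\cdot\mid\suptime{\mathscr{T}}{0}]}\left[\sum_{t}\nabla_\theta \log \pi_\theta(\suptime{a}{t}\mid \suptime{x}{t})\right]$. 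Ingredient (ii) then identifies this with $\beta\,\nabla_{\suptime{\theta_i}{n}}\lambda(\pi_{\suptime{\theta_i}{n}} \mid \suptime{\mathscr{T}}{0})$ up to the normalization built into $\ancestralGrad$, giving the claimed proportionality with a constant that does \emph{not} depend on $\suptime{\mathscr{T}}{0}$.

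For the \emph{averaged} statement I would take $\E_{\suptime{\mathscr{T}}{0}}$ of the conditioned identity. On the left, the definition $\ancestralGrad \bar\lambda(\pi_{\suptime{\theta_i}{n}}) := \E_{\suptime{\mathscr{T}}{0}}[\ancestralGrad \lambda(\pi_{\suptime{\theta_i}{n}} \mid \suptime{\mathscr{T}}{0})]$ together with the tower property fuses the expectation over the parent history and the expectation over $\suptime{\mathscr{T}}{0}$ into the single joint expectation in the statement. On the right, I would interchange $\nabla_{\suptime{\theta_i}{n}}$ with $\E_{\suptime{\mathscr{T}}{0}}$ so that $\E_{\suptime{\mathscr{T}}{0}}[\nabla_{\suptime{\theta_i}{n}}\lambda(\pi \mid \suptime{\mathscr{T}}{0})] = \nabla_{\suptime{\theta_i}{n}} \E_{\suptime{\mathscr{T}}{0}}[\lambda(\pi \mid \suptime{\mathscr{T}}{0})] = \nabla_{\suptime{\theta_i}{n}} \bar\lambda(\pi)$, which is exactly the definition of $\bar\lambda$. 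Since the proportionality constant from the conditioned step is $\suptime{\mathscr{T}}{0}$-independent, it factors cleanly through the average and the proportionality is preserved.

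The main obstacle — and the only place that is not a line-by-line transcription of the deterministic proof — is justifying the two interchanges involving $\E_{\suptime{\mathscr{T}}{0}}$: commuting $\nabla_{\suptime{\theta_i}{n}}$ with $\E_{\suptime{\mathscr{T}}{0}}$, and commuting $\E_{\suptime{\mathscr{T}}{0}}$ with the expectation over the parent history. Both require a dominated-convergence or uniform-integrability hypothesis, which holds under boundedness of the reward and of $\nabla_\theta \log \pi_\theta$ over the finite horizon $T$, as is the case in the finite-horizon tabular and smoothly parametrized settings considered here. I would also flag explicitly that the argument relies on the proportionality constant being genuinely independent of $\suptime{\mathscr{T}}{0}$: a family of proportionalities with varying constants would not average into a single proportionality, so this uniformity must be verified rather than assumed.
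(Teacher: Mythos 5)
Your proposal is correct and follows essentially the same route as the paper: it combines the conditioned identity $\E[j_i[\Xpath,\Apath\mid\suptime{\mathscr{T}}{0}]]=\back^{\pi_\theta}[\Xpath,\Apath\mid\suptime{\mathscr{T}}{0}]$ with the conditioned policy-gradient formula~\eqref{eq:ancestral-grad-general-mdp} to get the fixed-$\suptime{\mathscr{T}}{0}$ statement, then averages over $\suptime{\mathscr{T}}{0}$ exactly as the paper does. Your added attention to the interchange of $\nabla_\theta$ with $\E_{\suptime{\mathscr{T}}{0}}$ and to the $\suptime{\mathscr{T}}{0}$-independence of the proportionality constant (which is just $1/\beta$ here) is a welcome tightening of details the paper leaves implicit, not a different argument.
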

\begin{proof}
We prove the first equation.
By~\eqref{eq:ancestral-grad-general-mdp},
\begin{align}
    \nabla_\theta \lambda(\pi_\theta \mid \suptime{\mathscr{T}}{0}) =\frac{1}{\beta} \E_{\back^{\pi_{\theta}}[\Xpath, \Apath \mid \suptime{\mathscr{T}}{0}]} \left[ \sum_{t=0}^{T} \nabla_\theta \log \pi_{\theta}(\suptime{a}{t}, \suptime{x}{t})  \right].
\end{align}
By definition, have
\begin{align}
    \ancestralGrad \lambda(\pi_{\suptime{\theta_i}{n}}) = \E_{j_{i}^{}[\Xpath, \Apath \mid \suptime{\mathscr{T}}{0}]}\left[\sum_{t=0}^{T} \nabla_{\suptime{\theta_i}{n}} \log \pi_{\suptime{\theta_i}{n}}(\suptime{a}{t} \mid \suptime{x}{t})\right],
\end{align}
where we explicitly denote the dependency of $j_{i}$ on $\suptime{\mathscr{T}}{0}$.
Since $\E\left[j_{i}[\Xpath, \Apath \mid \suptime{\mathscr{T}}{0}] \right] = \back^{\pi_{\theta}}[\Xpath, \Apath \mid \suptime{\mathscr{T}}{0}]$, we have the first equation by combining these equations.
\end{proof}

\subsubsection{Generalization of Theorem~\ref{thm:bellman-back}}
We generalize $\popV{t}$ by
\begin{align}
    \popV{t}(x \mid \suptime{\mathscr{T}}{0}) := \frac{1}{\beta} \log \E_{\forward^{\pi}[\cdot\mid \suptime{x}{t}=x, \suptime{\mathscr{T}}{0}]} \left[ \exp\left(\beta \suptime{R}{t:}[\suptime{\Xpath}{t:},\suptime{\Apath}{t:}] \right) \right].
\end{align}
Then, we can generalize Theorem~\ref{thm:bellman-back} as follows.
\begin{theorem}
    \begin{align}
        \label{eq:bellman-back-general-mdp-detail}
        \popV{t}(\suptime{x}{t} \mid \suptime{\mathscr{T}}{0}) &=
        \E_{\back(\suptime{a}{t}\mid \suptime{x}{t} , \suptime{\mathscr{T}}{0})}\left[\gamma^{t} r(\suptime{x}{t}, \suptime{a}{t}) \right.\\
        &\left.-\frac{1}{\beta}\log \frac{\back(\suptime{a}{t}\mid \suptime{x}{t}, \suptime{\mathscr{T}}{0})}{\pi(\suptime{a}{t}\mid \suptime{x}{t})} + \popV{t+1}(\suptime{\mathcal{T}}{t}(\suptime{x}{t}, \suptime{a}{t}) \mid \suptime{\mathscr{T}}{0})\right].
    \end{align}
\end{theorem}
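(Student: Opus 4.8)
The plan is to observe that conditioning on $\suptime{\mathscr{T}}{0}$ freezes the random transition maps, so that the next state becomes the deterministic function $\suptime{x}{t+1} = \suptime{\mathcal{T}}{t}(\suptime{x}{t}, \suptime{a}{t})$. The conditioned lifted MDP therefore coincides exactly with the deterministic-transition setting of the main text, with $\suptime{\mathcal{T}}{t}$ playing the role of $f$. Consequently I expect the proof of Theorem~\ref{thm:bellman-back} to carry over verbatim, provided every forward law, backward law, value function, and KL divergence is additionally conditioned on $\suptime{\mathscr{T}}{0}$.

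First I would reestablish the conditioned analogue of the identity~\eqref{eq:pop-v-by-back-prob}. Running the Jensen's-inequality argument behind~\eqref{eq:variational-representation} on the conditioned forward law $\forward^{\pi}[\cdot \mid \suptime{x}{t}, \suptime{\mathscr{T}}{0}]$ and checking that equality is attained at the conditioned backward law $\back[\cdot \mid \suptime{x}{t}, \suptime{\mathscr{T}}{0}]$ yields
\begin{align}
    \popV{t}(x \mid \suptime{\mathscr{T}}{0}) = \E_{\back[\cdot \mid \suptime{x}{t}, \suptime{\mathscr{T}}{0}]}\left[\suptime{R}{t:}[\suptime{\Xpath}{t:},\suptime{\Apath}{t:}]\right] - \frac{1}{\beta}\KL{\back[\cdot \mid \suptime{x}{t}, \suptime{\mathscr{T}}{0}]}{\forward[\cdot \mid \suptime{x}{t}, \suptime{\mathscr{T}}{0}]}.
\end{align}
I would then split the reward as $\suptime{R}{t:} = \gamma^{t} r(\suptime{x}{t}, \suptime{a}{t}) + \suptime{R}{t+1:}$ and decompose the KL divergence into a one-step term and an expected future term, exactly as in the proof of Theorem~\ref{thm:bellman-back}. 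Using $\forward(\suptime{a}{t}\mid\suptime{x}{t}) = \pi(\suptime{a}{t}\mid\suptime{x}{t})$ --- conditioning on $\suptime{\mathscr{T}}{0}$ does not touch the action distribution --- the one-step KL term becomes $\E_{\back(\suptime{a}{t}\mid\suptime{x}{t},\suptime{\mathscr{T}}{0})}[\log(\back/\pi)]$, while the expected future reward and KL terms recombine into $\popV{t+1}(\suptime{\mathcal{T}}{t}(\suptime{x}{t}, \suptime{a}{t}) \mid \suptime{\mathscr{T}}{0})$, giving the claimed recursion.

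The crux --- the one place where conditioning on $\suptime{\mathscr{T}}{0}$ must be handled with care --- is the Markov factorization of the conditioned backward law, the analogue of
\begin{align}
    \back[\suptime{\Xpath}{t+1:}, \suptime{\Apath}{t:}\mid \suptime{x}{t}] = \back[\suptime{\Xpath}{t+2:}, \suptime{\Apath}{t+1:}\mid \suptime{x}{t+1}]\,\delta_{\suptime{x}{t+1}, f(\suptime{x}{t}, \suptime{a}{t})}\,\back[\suptime{a}{t}\mid \suptime{x}{t}]
\end{align}
used in the deterministic case. I would verify it by writing $\back[\cdot \mid \suptime{\mathscr{T}}{0}] \propto e^{\beta R}\forward[\cdot \mid \suptime{\mathscr{T}}{0}]$ and invoking that, given $\suptime{\mathscr{T}}{0}$, the forward law already factorizes with the deterministic constraint $\suptime{x}{t+1} = \suptime{\mathcal{T}}{t}(\suptime{x}{t}, \suptime{a}{t})$ in place of $\suptime{x}{t+1} = f(\suptime{x}{t}, \suptime{a}{t})$. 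Once this factorization is in hand, the remaining algebra is identical to the appendix proof of Theorem~\ref{thm:bellman-back}; no subsequent averaging over $\suptime{\mathscr{T}}{0}$ is required, since the statement is conditioned on $\suptime{\mathscr{T}}{0}$ throughout.
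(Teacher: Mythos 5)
Your proposal is correct and follows essentially the same route as the paper: the paper's own proof consists precisely of rerunning the argument for Theorem~\ref{thm:bellman-back} with $\forward$, $\back$, and $\popV{t}$ conditioned on $\suptime{\mathscr{T}}{0}$ and with $f(x,a)$ replaced by $\suptime{\mathcal{T}}{t}(x,a)$. You in fact supply more detail than the paper does, correctly identifying the Markov factorization of the conditioned backward law as the step that needs checking.
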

\begin{proof}
    We can prove thie theroem in the same way as Theorem~\ref{thm:bellman-back} by the following modification.
    First, we condition $\forward, \back$, and $\popV{t}$ by $\suptime{\mathscr{T}}{0}$ in the proof of Theorem~\ref{thm:bellman-back}.
    Second, replace $f(x,a)$ with $\mathcal{T}(x,a)$.
\end{proof}
We can prove the average version of the theorem.
We define
\begin{align}
    \bar{\popV{t}}(x) := \E_{\{\suptime{\mathcal{T}}{t'}\}_{t' \ge t}}\left[\popV{t}(x \mid\suptime{\mathscr{T}}{t})  \right].
\end{align}
\begin{theorem}
    \begin{align}
        \label{eq:bellman-back-general-mdp}
        \bar{\popV{t}}(\suptime{x}{t}) &=
        \E_{\suptime{\mathscr{T}}{0}}\left[\E_{\back(\suptime{a}{t}\mid \suptime{x}{t}, \suptime{\mathscr{T}}{0})}\left[\gamma^{t} r(\suptime{x}{t}, \suptime{a}{t})\right.\right.\\
        &\left.\left. \quad - \frac{1}{\beta}\log \frac{\back(\suptime{a}{t}\mid \suptime{x}{t}, \suptime{\mathscr{T}}{0})}{\pi(\suptime{a}{t}\mid \suptime{x}{t})} +\popV{t+1}(\suptime{\mathcal{T}}{t}(\suptime{x}{t}, \suptime{a}{t} \mid\suptime{\mathscr{T}}{0} )  \right]\right].
    \end{align}
\end{theorem}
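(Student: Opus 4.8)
The plan is to obtain the averaged identity by simply taking the expectation $\E_{\suptime{\mathscr{T}}{0}}$ over the random transition maps on both sides of the conditioned Bellman equation~\eqref{eq:bellman-back-general-mdp-detail}, which the preceding theorem already establishes for \emph{every} fixed realization of $\suptime{\mathscr{T}}{0}$. Because that identity holds pointwise in $\suptime{\mathscr{T}}{0}$, applying $\E_{\suptime{\mathscr{T}}{0}}$ to its right-hand side reproduces verbatim the right-hand side of the claimed equation~\eqref{eq:bellman-back-general-mdp}; no manipulation of the inner backward expectation $\E_{\back(\suptime{a}{t}\mid \suptime{x}{t}, \suptime{\mathscr{T}}{0})}[\,\cdot\,]$ or of the $\popV{t+1}(\cdot\mid\suptime{\mathscr{T}}{0})$ term is required, since the statement retains the outer average over $\suptime{\mathscr{T}}{0}$ rather than collapsing it into $\bar{\popV{t+1}}$. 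The entire right-hand side is thus a term-by-term transcription of the conditioned statement placed under $\E_{\suptime{\mathscr{T}}{0}}$.

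The only genuine work is to identify the expectation of the left-hand side with $\bar{\popV{t}}(\suptime{x}{t})$. First I would observe that $\popV{t}(x\mid\suptime{\mathscr{T}}{0})$, being defined through $\forward^{\pi}[\cdot\mid\suptime{x}{t}=x,\suptime{\mathscr{T}}{0}]$ and the truncated reward $\suptime{R}{t:}$, depends on $\suptime{\mathscr{T}}{0}$ only through the transition maps governing transitions at and after time $t$, namely the tail $\{\suptime{\mathcal{T}}{t'}\}_{t'\ge t}$; once we condition on $\suptime{x}{t}=x$, neither the initial state $\suptime{x}{0}$ nor the earlier maps ever enter the definition. This follows directly from the recursive path construction of Lemma~\ref{claim:construct-path-from-metapath}, in which $\suptime{x}{s+1}$ is built from $\suptime{x}{s}$ and $\suptime{a}{s}$ using only the current and future maps.

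Since the maps $\suptime{\mathcal{T}}{t'}$ are sampled in an i.i.d.\ manner, the marginal law of the tail $\{\suptime{\mathcal{T}}{t'}\}_{t'\ge t}$ induced by $\suptime{\mathscr{T}}{0}$ coincides with the law used in the definition $\bar{\popV{t}}(x)=\E_{\{\suptime{\mathcal{T}}{t'}\}_{t'\ge t}}[\popV{t}(x\mid\suptime{\mathscr{T}}{t})]$. Integrating out the irrelevant coordinates of $\suptime{\mathscr{T}}{0}$ therefore gives $\E_{\suptime{\mathscr{T}}{0}}[\popV{t}(x\mid\suptime{\mathscr{T}}{0})]=\bar{\popV{t}}(x)$, and combining the two sides yields~\eqref{eq:bellman-back-general-mdp}. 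The main obstacle, such as it is, lies precisely in this identification: one must verify that $\popV{t}(x\mid\suptime{\mathscr{T}}{0})$ is a function of the future maps alone and that conditioning on $\suptime{x}{t}=x$ does not reintroduce dependence on the discarded past maps. Both points follow from the i.i.d.\ sampling together with the recursive construction, after which the averaged equation is immediate.
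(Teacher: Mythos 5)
Your proposal is correct and follows exactly the route the paper intends: the paper gives no written proof for this averaged version, treating it as an immediate consequence of taking $\E_{\suptime{\mathscr{T}}{0}}$ on both sides of the conditioned equation~\eqref{eq:bellman-back-general-mdp-detail}. Your additional step of checking that $\E_{\suptime{\mathscr{T}}{0}}\left[\popV{t}(x \mid \suptime{\mathscr{T}}{0})\right] = \bar{\popV{t}}(x)$, via the observation that $\popV{t}(x \mid \suptime{\mathscr{T}}{0})$ depends only on the i.i.d.\ tail maps once $\suptime{x}{t}=x$ is fixed, is the one detail the paper leaves implicit, and you handle it correctly.
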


\end{document}